\documentclass{article}

\usepackage{microtype}
\usepackage{graphicx}
\usepackage{subcaption}
\usepackage{booktabs} 

\usepackage{hyperref}

\usepackage[accepted]{icml2026}

\usepackage{amsmath}
\usepackage{amssymb}
\usepackage{mathtools}
\usepackage{amsthm}
\usepackage{aliascnt}
\usepackage{adjustbox}
\usepackage{float}

\usepackage[capitalize,noabbrev]{cleveref}

\theoremstyle{plain}
\newtheorem{theorem}{Theorem}[section]
\newaliascnt{proposition}{theorem}

\aliascntresetthe{proposition}
\newaliascnt{lemma}{theorem}
\newtheorem{lemma}[lemma]{Lemma}
\aliascntresetthe{lemma}
\newaliascnt{corollary}{theorem}

\aliascntresetthe{corollary}
\theoremstyle{definition}
\newaliascnt{definition}{theorem}

\aliascntresetthe{definition}
\newaliascnt{assumption}{theorem}
\newtheorem{assumption}[assumption]{Assumption}
\aliascntresetthe{assumption}
\theoremstyle{remark}
\newaliascnt{remark}{theorem}

\aliascntresetthe{remark}
\theoremstyle{example}
\newaliascnt{example}{theorem}
\newtheorem{example}[example]{Example}
\aliascntresetthe{example}

\crefname{theorem}{Theorem}{Theorems}
\Crefname{theorem}{Theorem}{Theorems}
\crefname{proposition}{Proposition}{Propositions}
\Crefname{proposition}{Proposition}{Propositions}
\crefname{lemma}{Lemma}{Lemmas}
\Crefname{lemma}{Lemma}{Lemmas}
\crefname{corollary}{Corollary}{Corollaries}
\Crefname{corollary}{Corollary}{Corollaries}
\crefname{definition}{Definition}{Definitions}
\Crefname{definition}{Definition}{Definitions}
\crefname{assumption}{Assumption}{Assumptions}
\Crefname{assumption}{Assumption}{Assumptions}
\crefname{remark}{Remark}{Remarks}
\Crefname{remark}{Remark}{Remarks}
\crefname{example}{Example}{Examples}
\Crefname{example}{Example}{Examples}

\crefname{appendix}{Appendix}{Appendices}
\Crefname{appendix}{Appendix}{Appendices}

\usepackage[textsize=tiny]{todonotes}

\newcommand{\Ll}{\mathcal{L}}

\newcommand{\Nn}{\mathcal{N}}

\newcommand{\Uu}{\mathcal{U}}
\newcommand{\EE}{\mathbb{E}}

\newcommand{\RR}{\mathbb{R}}
\DeclareMathOperator*{\argmax}{arg\,max}

\definecolor{C0}{RGB}{31,119,180}
\definecolor{C1}{RGB}{255,127,14}
\definecolor{C2}{RGB}{44,160,44}
\definecolor{C3}{RGB}{214,39,40}
\definecolor{C4}{RGB}{148,103,189}
\definecolor{C5}{RGB}{140,86,75}
\definecolor{C6}{RGB}{227,119,194}
\definecolor{C7}{RGB}{127,127,127}
\definecolor{C8}{RGB}{188,189,34}
\definecolor{C9}{RGB}{23,190,207}

\begin{document}

\twocolumn[
  \icmltitle{Calibrated Test-Time Guidance for Bayesian Inference}



  \icmlsetsymbol{equal}{*}

  \begin{icmlauthorlist}
    \icmlauthor{Daniel Geyfman}{equal,uci}
    \icmlauthor{Felix Draxler}{equal,uci}
    \icmlauthor{Jan Groeneveld}{equal,uci}
    \icmlauthor{Hyunsoo Lee}{snu}
    \icmlauthor{Theofanis Karaletsos}{czi}
    \icmlauthor{Stephan Mandt}{uci}
  \end{icmlauthorlist}

  \icmlaffiliation{uci}{Department of Computer Science, University of California, Irvine, CA, USA}
  \icmlaffiliation{czi}{Chan-Zuckerberg Initiative, Redwood City, CA, USA}
  \icmlaffiliation{snu}{Seoul National University, Seoul, South Korea}

  \icmlcorrespondingauthor{Felix Draxler}{fdraxler@uci.edu}
  \icmlcorrespondingauthor{Stephan Mandt}{mandt@uci.edu}

  \icmlkeywords{Machine Learning, ICML}

  \vskip 0.3in
]



\printAffiliationsAndNotice{}  

\begin{abstract}
    Test-time guidance is a widely used mechanism for steering pretrained diffusion models toward outcomes specified by a reward function. Existing approaches, however, focus on maximizing reward rather than sampling from the true Bayesian posterior, leading to miscalibrated inference. In this work, we show that common test-time guidance methods do not recover the correct posterior distribution and identify the structural approximations responsible for this failure. We then propose consistent alternative estimators that enable calibrated sampling from the Bayesian posterior. We significantly outperform previous methods on a set of Bayesian inference tasks, and set a new state-of-the-art PSNR in black hole image reconstruction. We publish our code at \url{https://github.com/mandt-lab/Calibrated-Guidance}.
\end{abstract}

\section{Introduction}

Diffusion models \cite{sohl2015deep,song2019generative,ho2020denoising,song2021scorebased} and their flow matching variants \cite{liu2023flow,lipman2023flow,albergo2023building} have achieved tremendous success in sampling high-quality yet diverse images \cite{dhariwal2021diffusion,rombach2022high} and videos \cite{yang2023diffusion,ho2022video} as well as scientific applications such as molecule generation \cite{hoogeboom2022equivariant} and protein-ligand docking \cite{corso2023diffdock}.

A core benefit of diffusion models is their ability to sample from tilted distributions at test time: Take a pretrained model and guide the generation towards a desired outcome as specified by a potentially nonlinear reward function \cite{chung2023diffusion}. This enables applying pretrained diffusion models to inverse problems such as super-resolution, deblurring, denoising, style guidance, image editing, and others, without any additional training.

This approach is motivated by the goal of sampling from the Bayesian posterior, proportional to the diffusion model prior and a likelihood function that specifies the task (often called reward).
\begin{figure*}
    \centering
    \includegraphics[width=\linewidth]{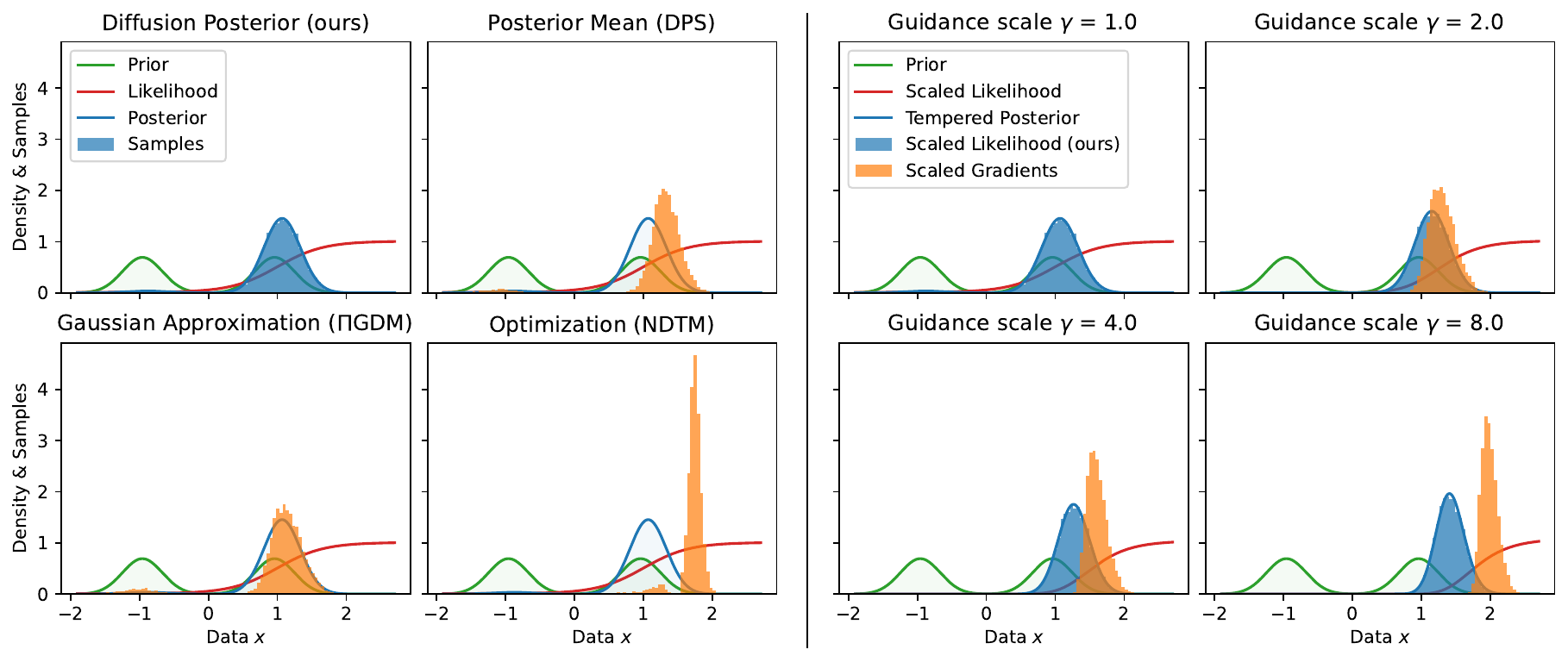}
    \caption{\textbf{We present a test-time guidance scheme to sample from calibrated Bayesian posteriors.} 
    \textit{(Left)}~Our framework accurately samples the correct posterior {\color{C0}(blue)}. Posterior mean (\cref{eq:tweedie-estimator}), posterior Gaussian (\cref{eq:noisy-tweedie-estimator}), and optimal control approximations (\cref{eq:optimal-control}) to the diffused likelihood $p(y \mid x_t)$ yield uncalibrated samples {\color{C1}(orange)}. \textit{(Right)}~Our framework can correctly sample from tempered posteriors $p(x \mid y, \gamma) \propto p(x) p(y \mid x)^\gamma$ {\color{C0}(blue)}. Rescaling the noisy gradient by $\gamma$ leads to biased samples, even if the diffused likelihood $p(y \mid x_t)$ is accurately estimated {\color{C1}(orange)}.
    %
    }
    \label{fig:shortcomings}
\end{figure*}
In this context, we contribute:
\begin{itemize}
    \item We find that \textbf{ existing test-time guidance methods do not sample from the true Bayesian posterior}~(\cref{sec:shortcomings}), see \cref{fig:shortcomings}.
    We demonstrate that they use biased estimators for the diffused likelihood~(\cref{thm:posterior-mean-biased,thm:posterior-gaussian-biased,thm:guidance-scale-biased}), suggesting why they usually do not converge to the true posterior even with more compute.
    \item We propose Calibrated Bayesian Guidance (CBG), a new consistent yet tractable guidance framework for sampling the correct diffusion posterior in \cref{sec:calibrated-guidance}. Our framework supports non-differentiable objectives, and we propose practical approximations for fast sampling.
    \item Experimentally, our estimators accurately sample the Bayesian posterior on Bayesian inference and set a new state-of-the-art PSNR on a black-hole imaging task.
\end{itemize}
Together, we illustrate and address an important gap in the literature to accurately sample from Bayesian posteriors based on pretrained diffusion model priors.


\section{Background}

\subsection{Diffusion Models}

Diffusion models represent a data distribution $p(x)$ by iteratively transforming samples from a latent distribution $p(x_1)$ (usually a standard normal) through a stochastic or ordinary differential equation parameterized by a neural network $s_\theta(x_t, t)$ towards data samples of $p_\theta(x)$. The idea is to learn a neural network to reverse the addition of noise to the training data $x \sim p(x)$:
\begin{equation}
    p(x_t \mid x) = \Nn(x_t; a_t x, b_t^2 I).
    \label{eq:diffusion-forward}
\end{equation}
We adopt the convention that $t=0$ corresponds to the data distribution (so $a_0 = 1$ and $b_0 = 0$), and $t=1$ is the noisy latent ($a_1 = 0$ and $b_1 = 1$). We write $x = x_0$ for noise-free data samples.

The ordinary differential equation for sampling from a diffusion model is given by \cite{karras2022elucidating}:
\begin{equation}
    \frac{dx}{dt} = \frac{\dot a_t}{a_t} x - b_t^2 \dot a_t a_t \underbrace{\nabla_{x_t} \log p(x_t)}_{\text{learned } s_\theta(x_t, t)}.
    \label{eq:diffusion-ode}
\end{equation}



Diffusion models are learned by minimizing the following loss:
\begin{equation}
    \EE_{t \sim \Uu[0, 1], x \sim p(x), x_t \sim p(x_t \mid x)}\!\left[ \left\| \frac{a_tx - x_t}{b_t^2} - s_\theta(x_t, t) \right\|^2 \right].
    \label{eq:denoising-diffusion-loss}
\end{equation}
Other formulations of diffusion rewrite \cref{eq:denoising-diffusion-loss} to instead predict the mean $f_\theta(x_t, t) \approx \mathbb E[x \mid x_t]$ or the instantaneous velocity $v_\theta(x_t, t) \approx \EE_{x|x_t}[x - \epsilon \mid x_t]$, where $x_t = a_t x + b_t \epsilon$. They also introduce weighting functions $\lambda(t)$ that rescale the loss for better convergence, or sample $t \sim p(t)$. Similarly, different choices for the coefficients $a_t, b_t$ lead to different formulations of diffusion and flow matching.

For the experiments in this paper, we choose the schedules ourselves, or leverage pretrained models, in which case we adopt their diffusion schedule. 


\subsection{Bayesian Inference}

We are interested in sampling from the Bayesian posterior:
\begin{equation}
    p(x \mid y) \propto p(x) p(y \mid x).
    \label{eq:bayesian-posterior}
\end{equation}
Here, the prior $p(x)$ is usually given by training a diffusion model on data; in our experiments in \cref{sec:sbi}, we find that it is even useful for analytic priors with tractable noising convolutions. The ``observation'' $y$ is an abstract placeholder for specifying the task: it could be a noisy measurement, but also covers arbitrary reward functions~$r(x; y)$, potentially without an external signal $r(x; y) \equiv r(x)$, so that 
$p(y \mid x) = \exp\!\left(r(x; y)\right)$.

Sampling from \cref{eq:bayesian-posterior} is a hard task in general. Diffusion models can be used here by learning them as a prior distribution $p(x)$. The core insight is that given a pretrained diffusion model, we can modify its prediction to correct from the diffusion prior to the Bayesian posterior via Bayes' rule:
\begin{equation}
    \nabla_{x_t} \log p(x_t \mid y) = \underbrace{\nabla_{x_t} \log p(x_t)}_\text{pretrained model} + \underbrace{\nabla_{x_t} \log p(y \mid x_t)}_\text{diffused likelihood}.
    \label{eq:noisy-posterior-score}
\end{equation}

While this modification is formally correct, the general form of the likelihood $p(y \mid x)$ in \cref{eq:bayesian-posterior} 
makes it impossible to analytically compute the diffused likelihood $p(y \mid x_t)$ on noisy data as required for \cref{eq:noisy-posterior-score}, even if we have a closed-form expression for $p(y \mid x)$.

Instead, we need to approximate this integral, which evaluates to:
\begin{equation}
    p(y \mid x_t) = \int p(x \mid x_t) p(y \mid x) dx,
    \label{eq:diffused-likelihood}
\end{equation}
where $p(x \mid x_t)$ samples from the diffusion posterior.

The central goal of this paper is to identify valid approximations of the integral in \cref{eq:diffused-likelihood} that lead to sampling from the true Bayesian posterior $p(x \mid y)$ as defined in \cref{eq:bayesian-posterior}.


\subsection{Diffused Likelihood Approximation}
\label{sec:diffused-likelihoods}

\begin{figure*}
    \centering
    \includegraphics[width=.8\linewidth]{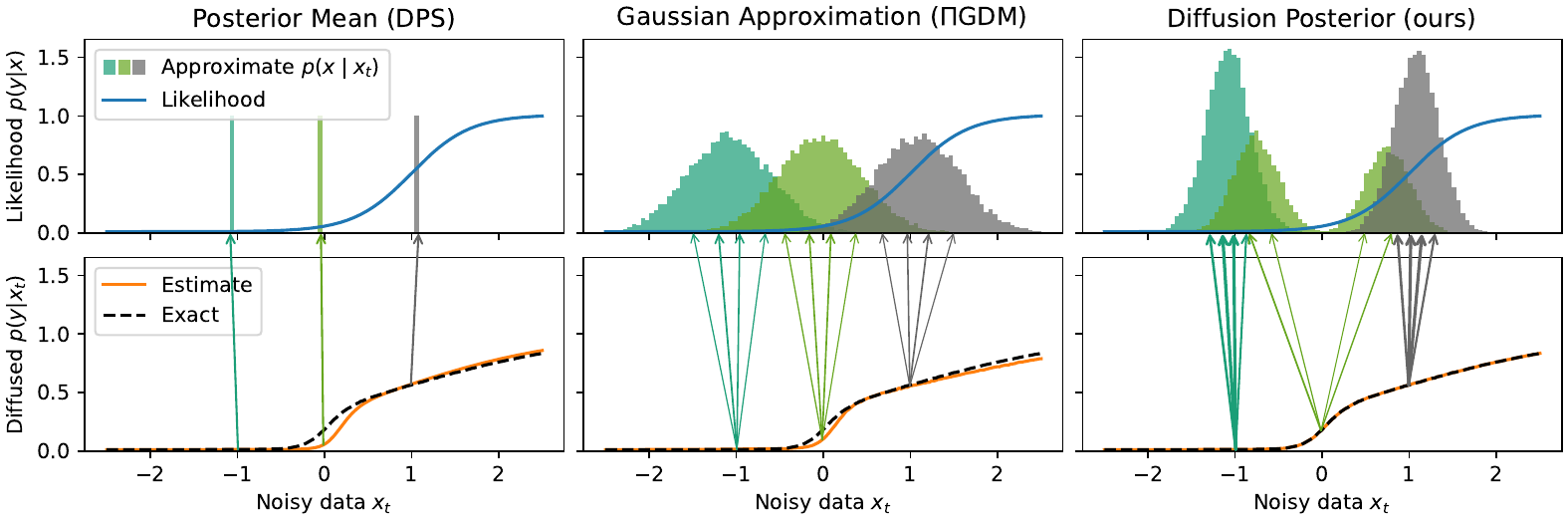}
    \caption{\textbf{Approximations of diffused likelihoods.} \textit{(Left)} The posterior mean approximation in \cref{eq:tweedie-estimator} looks up the likelihood value at the mean of the diffusion posterior. 
    \textit{(Center)} Gaussian approximations to the posterior lead to inconsistent estimates that cannot be corrected by sampling more points. \textit{(Right)}
    Our method relies on the true diffusion posterior $p(x \mid x_t)$, yielding arbitrary precision to determine the diffused likelihood $p(y \mid x_t)$ and its gradients.}
    \label{fig:diffused-likelihood-approximations}
\end{figure*}

\Cref{eq:diffused-likelihood} is usually approximated to save costs. In this section, we show how these approximations prevent sampling from the correct posterior.

The common approximations in the literature are:
    The \textbf{Posterior Mean Approximation} computes the cost of the mean, instead of the mean of the cost \cite{chung2023diffusion}:
    \begin{equation}
        p(y \mid x_t) \approx p(y \mid x = \EE[x \mid x_t]).
        \label{eq:tweedie-estimator}
    \end{equation}
    Intuitively, this is justified late in sampling when the diffusion posterior has small support relative to the curvature of the reward.
    
    Evaluate \cref{eq:diffused-likelihood} with a \textbf{Posterior Gaussian Approximation} to the denoising distribution $p(x \mid x_t)$, as proposed as $\Pi$GDM \cite{song2023pseudoinverseguided}:
    \begin{equation}
        p(y \mid x_t) \approx \int p(y \mid x)\Nn(x; \EE[x \mid x_t], \sigma_t^2 I) dx.
        \label{eq:noisy-tweedie-estimator}
    \end{equation}
    The standard deviation is typically chosen as $\sigma_t^2 = b_t^2 / (a_t^2 + b_t^2)$, the correct value if the prior $p(x)$ is a standard normal distribution.
    
    \textbf{Optimal control} formulations such as nonlinear diffusion trajectory matching (NDTM)~\cite{pandey2025variational} steer the trajectory through a local optimization problem:
    \begin{equation}
        \tilde x_t = x_t + u_t, \text{where } u_t = \operatorname{argmin}_{u_t} \Ll(u_t, x_t)
        \label{eq:optimal-control}
    \end{equation} 
    The loss function $\Ll(u_t, x_t)$ usually regularizes finite control $u_t$ plus maximizes the posterior mean approximation in \cref{eq:tweedie-estimator}. The optimization over $u_t$ broadens the design space of the guidance framework. While this usually leads to higher rewards, it does not aim for calibrated posterior sampling.

We analyze the shortcomings of these estimators in \cref{sec:shortcomings} and propose a well-calibrated alternative in \cref{sec:calibrated-guidance}.

\section{Related Work}

Our training-free guidance framework is based on diffusion models. Diffusion models learn to generate data by iteratively denoising samples from a simple noise distribution, with foundational formulations ranging from discrete-time diffusion processes \cite{sohl2015deep,ho2020denoising} to continuous-time and score-based \cite{song2019generative,song2021scorebased} as well as velocity-based perspectives \cite{liu2023flow,lipman2023flow,albergo2023building}. These methods have demonstrated strong empirical performance across modalities. Regarding the calibration of these models, existing work analyzes epistemic uncertainty \cite{jazbec2025generative}, and improves tail behavior
\cite{pandey2025heavytailed}.

The test-time guidance framework we develop in this paper leverages pretrained diffusion models and adapts them to novel tasks. The first approaches to solving inverse problems using diffusion models consider \textit{linear} tasks such as inpainting, super-resolution, and colorization~\citep{kadkhodaie2021stochastic,song2021scorebased,chung2022come, kawar2022denoising,chung2022improving}. 
The methods strictly adhere to the task specification, not aiming to recover the Bayesian posterior.

Diffusion Posterior Sampling (DPS) \cite{chung2023diffusion} generalized this to solve nonlinear inverse problems using the posterior mean approximation. Several follow-up works extend DPS to more tasks, such as
Universal Guidance for Diffusion Models \cite{bansal2024universal} and
FreeDoM \cite{yu2023freedom}. We find that the posterior mean approximation can yield high adherence to the reward, but the sampling does not follow the true Bayesian posterior diffusion.

To combat the inaccuracies in the posterior mean approximation, Pseudoinverse-Guided Diffusion Models ($\Pi$GDM) \cite{song2023pseudoinverseguided} and C-$\Pi$GDM \cite{pandey2024fast} approximate the diffusion posterior $p(x \mid x_t)$ by a normal distribution. This approximation is also used in Loss-Guided Diffusion (LGD) \cite{song2023lossguided} as well as Diffusion Policy Gradient (DPG) \cite{tang2023solving}. Again, we find this can be biased in general.

MPGD \cite{he2024manifold} projects the posterior mean to the data manifold and evaluate the likelihood there. This alleviates the problem that the reward function might be ill-defined off the data manifold, a common scenario for posterior approximations. However, this method is still limited by a single point estimate for the diffusion posterior.

Unified Training-Free Guidance (TFG) \cite{ye2024tfg} unifies several design decisions underlying previous methods. Similarly, Nonlinear Diffusion Trajectory Matching (NDTM) \cite{pandey2025variational} extends the design space, but ultimately both rely on the posterior mean approximation. Divide-and-Conquer Posterior Sampling (DCPS)~\mbox{\cite{janati2024divideandconquer}} performs local Monte Carlo updates at every diffusion step.


In contrast to all these works, we formulate a consistent diffusion posterior sampling, where increased computational budget leads to calibrated samples. In concurrent work, \citet{potaptchik2026meta} train few-step models to evaluate the same estimators as in \cref{eq:reinforce-estimator,eq:reparameterization-estimator}. While the focus of that work is on image synthesis, we focus on Bayesian inference. Also, by using the renoise trick, we can leverage pretrained one-step or few-step models.

A complementary body of work learns the diffused likelihood directly, which requires retraining for new tasks. This is achieved in classifier-free guidance~\cite{ho2021classifierfree}, which requires the likelihood to be representable as an input to the model. Other methods leverage reinforcement learning techniques to learn the diffused likelihood or finetune the teacher to a given reward~\cite{li2024derivative,fan2023reinforcement,black2024training,uehara2024fine,pandey2026hierarchical}.

\section{Shortcomings of Existing Estimators}
\label{sec:shortcomings}

We now derive that the approximations of the diffused likelihoods in \cref{sec:diffused-likelihoods} yield uncalibrated Bayesian posteriors. In particular, we show that the approximations are always biased, apart from trivial cases.

\subsection{Inconsistency of Diffused Likelihood Estimators}
\label{sec:biased-tilting}

The estimators introduced in \cref{eq:tweedie-estimator,eq:noisy-tweedie-estimator,eq:optimal-control} in \cref{sec:diffused-likelihoods} have one common pitfall: They are inconsistent estimators, meaning that even increasing the computational budget will not approximate the true diffused likelihood in \cref{eq:diffused-likelihood}. Instead, they converge to a biased estimate.


First, we find that the posterior mean approximation in \cref{eq:tweedie-estimator} fails once the likelihood has a nontrivial maximizer on the support of the prior, which is common for example in the context of inverse problems:
\begin{theorem}[Informal]
    \label{thm:posterior-mean-biased}
    Fix $0 < t < 1$. If the likelihood $p(y \mid x)$ has a nontrivial maximizer, then, unless the likelihood is constant in $x$, there exists some $x_t$ at which the posterior mean approximation in \cref{eq:tweedie-estimator} fails.
\end{theorem}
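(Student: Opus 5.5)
The plan is to argue by contradiction using the maximizer. Write $\ell(x) = p(y \mid x)$ and let $x^\star$ be a nontrivial maximizer, so that $\ell(x^\star) = \sup_x \ell(x) =: M$. Suppose the posterior mean approximation in \cref{eq:tweedie-estimator} is exact at every $x_t$, that is,
\begin{equation*}
\int p(x \mid x_t)\, \ell(x)\, dx = \ell\bigl(\EE[x \mid x_t]\bigr) \quad \text{for all } x_t .
\end{equation*}
The goal is to show that $\ell$ must then be constant on the support of the prior, which contradicts the hypothesis.

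The key step is to find some $x_t$ whose posterior mean lands on the maximizer, $\EE[x \mid x_t] = x^\star$. Once we have such an $x_t$, the identity above gives $\EE_{p(x \mid x_t)}[\ell(x)] = M$. Since $\ell \le M$ pointwise, this forces $\ell(x) = M$ for $p(x \mid x_t)$-almost every $x$.

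For $0 < t < 1$, the diffusion posterior is
\begin{equation*}
p(x \mid x_t) \propto p(x)\, \Nn(x_t; a_t x, b_t^2 I).
\end{equation*}
The Gaussian factor is strictly positive, so this posterior has the same support as the prior $p(x)$. Hence $\ell \equiv M$ almost everywhere on the support of the prior. In other words, the likelihood is constant in $x$ where it matters, which is the excluded trivial case. This contradiction produces the required $x_t$ at which \cref{eq:tweedie-estimator} fails.

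The main obstacle is the existence of $x_t$ with $\EE[x \mid x_t] = x^\star$, which is where "nontrivial" must be made precise. I would interpret it as $x^\star$ lying in the interior of the convex hull of the prior's support, or more generally in the range of the denoiser $x_t \mapsto \EE[x \mid x_t]$. To justify this, I would use Tweedie's formula,
\begin{equation*}
\EE[x \mid x_t] = \frac{x_t + b_t^2 \nabla \log p(x_t)}{a_t},
\end{equation*}
which is the gradient of the convex function $\tfrac{1}{a_t}\bigl(\tfrac12\|x_t\|^2 + b_t^2 \log p(x_t)\bigr)$. This convexity holds because the Hessian of that function is proportional to the posterior covariance, which is positive semidefinite.

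By standard convex analysis, the range of the gradient of a convex function whose conjugate has domain equal to the closed convex hull of the support contains the relative interior of that hull. So any $x^\star$ in the interior of the convex hull of the support is attained. If one wants to avoid this, a fallback is to restrict the informal statement to maximizers that are posterior means for some $x_t$, and to note that this holds for full-support priors such as Gaussians.
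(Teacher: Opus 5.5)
Your proposal is correct, and its core is the paper's argument. Assume the identity at every $x_t$ and take $x_t^\star$ with $\EE[x\mid x_t^\star]=x^\star$. This gives $\int p(x\mid x_t^\star)\,p(y\mid x)\,dx=M$. Then $p(y\mid x)\le M$, together with the fact that $p(x\mid x_t^\star)$ and $p(x)$ have the same null sets, forces $p(y\mid x)=M$ prior-almost surely, which is exactly what the paper's hypothesis $\mathbb P_{x\sim p}[x\in S]<1$ excludes.

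You differ from the paper at the step you flag as the obstacle. The paper does not prove it: its Assumption simply postulates $S\cap\mu_t(\RR^d)\neq\emptyset$. Its exponential-family detour (minimality, injectivity of the mean map) is not needed for the existence of $x_t^\star$, since a point of the image has a preimage by definition. You instead give a checkable sufficient condition. $\mu_t$ is the gradient of a finite convex function on $\RR^d$ whose Hessian is $\tfrac{a_t}{b_t^2}\mathrm{Cov}[x\mid x_t]$. By standard convex duality its range therefore contains the relative interior of the domain of the conjugate, which is the relative interior of the convex hull of $\operatorname{supp} p$. This is the surjectivity counterpart of the injectivity result the paper cites from Wainwright and Jordan. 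It also removes the residual ``lies in $\mu_t(\RR^d)$'' clause from the paper's compact-support example.

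One small fix: the conjugate's domain is in general not the closed convex hull. For a uniform prior on $[-1,1]$ it is the open interval $(-1,1)$. Its closure is the closed hull, though, so the relative interiors agree and your conclusion stands.

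Finally, you take $M$ as the global supremum, while the paper takes it over $\operatorname{supp} p$. The argument only needs some $x^\star\in\mu_t(\RR^d)$ with $p(y\mid x^\star)\ge p(y\mid x)$ for prior-almost every $x$, and this covers both versions. Yours allows $x^\star$ to sit in a gap of the support. The paper's allows the global maximizer to lie outside the hull, provided a maximizer over the support lies in the range of $\mu_t$.
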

This means that \cref{eq:tweedie-estimator} is not exact. The detailed statement is given in \cref{proof:posterior-mean-biased}, together with a mild regularity assumption. Intuitively, the proof shows that the maximum of the diffused likelihood is not the same as the maximum of the original likelihood: $\max_{x_t} p(y \mid x_t) \neq \max_{x} p(y \mid x)$, contradicting \cref{eq:tweedie-estimator}.

Similarly, the Gaussian posterior approximation can be incorrect, and we give a concrete example below:
\begin{example}
    \label{thm:posterior-gaussian-biased}
    Let $p(x) = \tfrac12(\Nn(x; \mu, 1) + \Nn(x; -\mu, 1))$ and $p(y \mid x) = \Nn(x; 0, \rho^2)$ for $\mu, \rho^2 > 0$. Then, for every $t$, there exists $x_t \in \RR$ such that \cref{eq:noisy-tweedie-estimator} does not hold.
\end{example}
Please see \cref{proof:posterior-gaussian-biased} for the detailed derivation.

\Cref{fig:shortcomings} (left) shows how this leads to methods relying on these estimates to sample from incorrect posteriors: Every step in the guided diffusion process in \cref{eq:noisy-posterior-score} follows an approximation of the guidance term that ultimately leads to bad calibration. \Cref{fig:diffused-likelihood-approximations} visualizes how the different approaches approximate the diffused likelihoods.

In \cref{sec:calibrated-guidance}, we will derive novel estimators for the diffused likelihoods that are consistent: Increasing the computational budget for the diffused likelihood makes the estimator converge to the true value.

\subsection{Bias of Guidance Scales}
\label{sec:guidance-scale-bias}

However, even if we have access to a correct $p(y \mid x_t)$, there is another common bias introduced when guiding diffusion models: It occurs when one tries to vary the importance of the prior $p(x)$ relative to the likelihood $p(y \mid x)$ using a parameter $\gamma$:
\begin{equation}
    p(x \mid y, \gamma) \stackrel{\textrm{def}}{\propto} p(x) p(y \mid x)^\gamma.
    \label{eq:tempered-posterior}
\end{equation}
This is referred to as a tempered likelihood posterior in Bayesian inference \cite{wuttke2014temperature,mandt2016variational, friel2008marginal,pawlowski2017cooling}.

In the context of diffusion models, $\gamma$ is referred to as the classifier or the guidance scale \cite{ho2021classifierfree,dhariwal2021diffusion}. It is usually implemented by rescaling the likelihood term in \cref{eq:noisy-posterior-score}:
\begin{equation}
    \nabla_{x_t} \log p(x_t) + \gamma \nabla_{x_t} \log p(y \mid x_t).
    \label{eq:naive-guidance-scale}
\end{equation}
Assume now that we have access to the ground truth $p(y \mid x_t)$ for $\gamma = 1$. 
\Cref{eq:naive-guidance-scale} would imply that the \textit{tempered} diffused likelihood reads:
\begin{equation}
    p(y \mid x_t, \gamma) \overset{?}{=} p(y \mid x_t)^\gamma.
    \label{eq:naive-diffused-likelihood}
\end{equation}
However, this is not correct. According to the definition in \cref{eq:diffused-likelihood}, the guidance scale needs to be applied \textit{within the integral}:
\begin{equation}
    \label{eq:true-guidance-scale}
    p(y \mid x_t, \gamma)
    \propto \int p(x \mid x_t)p(y \mid x)^\gamma dx.
\end{equation}

In fact, we can show that \cref{eq:naive-diffused-likelihood} is only correct in the trivial case where the likelihood is constant, in which case guidance has no effect:
\begin{theorem}[Informal]
    \label{thm:guidance-scale-biased}
    Fix $0 < t < 1$ and $\gamma \in \RR_+ \setminus \{ 0, 1 \}$. Unless $p(y \mid x)$ is constant, there exists some $x_t$ at which the naive tempered diffused likelihood \cref{eq:naive-diffused-likelihood} fails.
\end{theorem}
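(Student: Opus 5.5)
Write $L(x) = p(y \mid x)$ and $g(x_t) = \int p(x \mid x_t) L(x)\,dx$. I read ``fails'' as: there is no constant $c>0$ with $\int p(x\mid x_t) L(x)^\gamma dx = c\, g(x_t)^\gamma$ for all $x_t$. This is the reading that matters, because \cref{eq:true-guidance-scale} holds only up to proportionality, and only the gradient enters \cref{eq:naive-guidance-scale}. The plan is to argue by contradiction using Jensen's inequality for the strictly convex or strictly concave map $u \mapsto u^\gamma$. The point is that the ``constant'' $c$ is pinned down by asymptotic regimes of $x_t$, and then strictness of Jensen breaks the identity somewhere.

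\textbf{Step 1: use Bayes to turn the posterior average into a prior average.} Since $p(x\mid x_t) \propto p(x)\,\Nn(x_t; a_t x, b_t^2 I)$, the identity can be rewritten as
\[
\EE_{p(x)}\!\left[k(x_t,x) L(x)^\gamma\right]\, \EE_{p(x)}\!\left[k(x_t,x)\right]^{\gamma-1} = c\, \EE_{p(x)}\!\left[k(x_t,x) L(x)\right]^\gamma ,
\]
with the Gaussian kernel $k(x_t,x)=\Nn(x_t;a_t x,b_t^2 I)$. Writing $x_t = a_t z$ and letting the kernel width be $b_t/a_t$ fixes this family of reweightings of $p(x)$.

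\textbf{Step 2: determine $c$ and get the contradiction.} Assume $0<t<1$, so $a_t,b_t>0$. For a single $x_t$, Jensen gives $\EE_{p(x\mid x_t)}[L^\gamma] \ge g(x_t)^\gamma$ for $\gamma>1$, with the reverse inequality for $\gamma<1$. Equality holds iff $L$ is $p(x\mid x_t)$-a.s. constant. Because the Gaussian kernel has full support, $p(x\mid x_t)$ has the same support as $p(x)$ for every $x_t$. So if $L$ is non-constant on the support of the prior, Jensen is strict at every $x_t$. That gives $c\neq 1$, but not yet a contradiction.

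To pin down $c$, I will use the mild regularity assumption of the detailed statement, which mirrors \cref{thm:posterior-mean-biased}: $L$ is continuous on the prior support, and the prior has a point $x^\*$ where the data posterior concentrates. I would try to pick a sequence $x_t^{(n)}$, for example $x_t^{(n)} = a_t x^{(n)}$ with $x^{(n)}$ sent toward a boundary or tail region of the prior, along which $p(x\mid x_t^{(n)})$ concentrates relative to the variation of $L$. Along it the Jensen gap ratio $\EE[L^\gamma]/g^\gamma \to 1$, forcing $c=1$, which contradicts the strict inequality at any fixed $x_t$.

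\textbf{Main obstacle.} Finding this concentrating regime for fixed $t$ is the real difficulty, since the posterior width does not shrink for fixed $t$. The fallback is to avoid asymptotics and instead contradict proportionality by comparing two points:

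1. Pick $x_t$ with $p(x\mid x_t)$ putting most mass near a region where $L$ is nearly constant. Such a region exists by continuity around the maximizer of $L$, and shifting the kernel center far enough makes this work for Gaussian-smoothed priors with light tails. There the Jensen ratio is close to $1$.
2. Pick another $x_t$ whose posterior straddles two regions with different $L$ values. There the ratio is bounded away from $1$.

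A non-constant ratio $\EE[L^\gamma]/g^\gamma$ contradicts proportionality, and also gives differing gradients of the logs. If that fails in full generality, I will state the theorem under the assumption that the posteriors $p(x\mid x_t)$ can be made to concentrate, or verify it on an explicit Gaussian example as in \cref{thm:posterior-gaussian-biased}.
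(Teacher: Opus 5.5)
\paragraph{Where the proposal breaks.} The gap comes from the reading you chose. You interpret ``fails'' as: no $x_t$-independent constant $c>0$ satisfies $\int p(x\mid x_t)\,p(y\mid x)^\gamma\,dx=c\,p(y\mid x_t)^\gamma$ for all $x_t$. If one only assumes that $p(y\mid x)$ is non-constant, this strengthened claim is false. So the step you flag as the main obstacle cannot be overcome.

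Take $p(x)=\Nn(x;\mu_0,\Sigma_0)$ and the linear reward $p(y\mid x)=\exp(\beta^\top x)$ with $\beta\neq 0$, which the paper's setup $p(y\mid x)=\exp(r(x;y))$ permits. Then $p(x\mid x_t)=\Nn(x;m(x_t),S)$ with $S=(\Sigma_0^{-1}+a_t^2 b_t^{-2} I)^{-1}$ independent of $x_t$, and
\[
\frac{\int p(x\mid x_t)\,p(y\mid x)^\gamma\,dx}{p(y\mid x_t)^\gamma}
=\frac{\exp\bigl(\gamma\beta^\top m(x_t)+\tfrac{\gamma^2}{2}\beta^\top S\beta\bigr)}{\exp\bigl(\gamma\beta^\top m(x_t)+\tfrac{\gamma}{2}\beta^\top S\beta\bigr)}
=\exp\bigl(\tfrac{\gamma^2-\gamma}{2}\beta^\top S\beta\bigr).
\]
This is an $x_t$-independent constant different from $1$. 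Here the naive rescaled score \cref{eq:naive-guidance-scale} is exactly the tempered score, while the literal identity fails at every $x_t$.

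The same example refutes your fallback. For Gaussian or Gaussian-smoothed priors, $p(x\mid x_t)$ keeps a fixed spread as $x_t$ moves. It therefore never concentrates on a region where $p(y\mid x)$ is nearly constant, and the Jensen ratio need not approach $1$. The ``mild regularity assumption of the detailed statement'' you invoke also does not exist; the detailed statement assumes only that $p(y\mid x)$ is not $p(x)$-a.s.\ constant.

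Your concentration idea can be made rigorous only under extra hypotheses. One such setting is a compactly supported prior and a bounded $p(y\mid x)$ that is continuous and positive at an exposed point of the support. As $\|x_t\|\to\infty$ in the exposing direction, $p(x\mid x_t)$ collapses onto that point, which forces $c=1$.

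\paragraph{What the paper proves.} The paper proves the literal version,
\[
\int p(x\mid x_t)\,p(y\mid x)^\gamma\,dx=\Bigl(\int p(x\mid x_t)\,p(y\mid x)\,dx\Bigr)^\gamma .
\]
Its proof is exactly the first half of your Step 2. It combines a strict Jensen lemma ($\EE[\phi(Z)]=\phi(\EE[Z])$ for strictly convex or concave $\phi(u)=u^\gamma$ iff $Z$ is a.s.\ constant) with the fact that $p(x\mid x_t)\propto p(x)\,p(x_t\mid x)$ has the same null sets as $p(x)$, since the Gaussian factor is strictly positive. Equal null sets, not merely equal support, is what transfers ``a.s.\ constant'' from $p(x\mid x_t)$ to $p(x)$. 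This already gives failure at every $x_t$.

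So you had a complete proof of the statement the paper establishes, and set it aside as ``$c\neq 1$, but not yet a contradiction.'' Your observation that only gradients enter the guidance is a fair remark about the strength of the theorem. The example above shows it cannot be addressed without additional assumptions on the prior and likelihood.
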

The detailed statement and proof are given in \cref{proof:guidance-scale-biased}. A similar result by \citet{sun2024inner} previously showed that the diffusion trajectory under \cref{eq:naive-diffused-likelihood} is inconsistent with the true diffusion forward process.

So whenever the likelihood is nontrivial, picking \cref{eq:naive-diffused-likelihood} instead of \cref{eq:true-guidance-scale} results in an error. \Cref{fig:shortcomings} (right) shows how the incorrect \cref{eq:naive-guidance-scale} as opposed to the correct \cref{eq:true-guidance-scale} leads to sampling from an incorrect tempered posterior.

Notably, this restriction not only applies to existing test-time guidance methods, but it also applies to classifier-free guidance \cite{ho2021classifierfree}, for which we can only extract the \textit{untempered} diffused likelihood gradient:
\begin{equation}
    \nabla_{x_t} \log p(y \mid x_t) \approx \nabla_{x_t} \log p_\theta(x_t \mid y) - \nabla_{x_t} \log p_\theta(x_t).
\end{equation}
Only a tempered conditional model $p_\theta(x_t \mid y, \gamma)$ can extract the correct tempered diffused likelihood gradient.

This means rescaling the diffused likelihood gradient is not enough for calibrated tempered inference.

Together, these results imply that no amount of additional computation can correct the bias in existing methods: they converge to the wrong distribution. We now derive a consistent test-time guidance framework that is immune to the shortcomings in \cref{sec:biased-tilting,sec:guidance-scale-bias}.


\section{Calibrated Bayesian Guidance (CBG)}
\label{sec:calibrated-guidance}

In the light of the shortcomings of existing estimators in \cref{sec:shortcomings}, we now present a novel guidance framework that enables consistent sampling from the true Bayesian posterior, both with and without tempering the likelihood. The central idea is to directly approximate the integral in \cref{eq:diffused-likelihood}.

\subsection{Differentiable Rewards}
\label{sec:reparameterization}

Assuming $p(y \mid x)$ is differentiable everywhere, we can compute the gradient of the diffused likelihood using the reparameterization trick:
\begin{align}&
    \nabla_{x_t} \log p(y \mid x_t)  \notag \\&
    = \frac{1}{p(y \mid x_t)} \nabla_{x_t} \int p(x \mid x_t) p(y \mid x) dx  \notag \\&
    = \frac{1}{p(y \mid x_t)} \int \nabla_{x_t} p(y \mid x = g_t(x_t; \epsilon)) p(\epsilon) d\epsilon.
\end{align}
Here, $g_t(x_t; \epsilon)$ samples from $p(x \mid x_t)$ such that it is differentiable with respect to $x_t$, and $\epsilon$ is the randomness that is used to sample from it.

This makes the gradient of the diffused likelihood straightforward to estimate:
\begin{align}&
    \nabla_{x_t} \log p(y \mid x_t) \notag \\&
    \approx \frac{1}{\sum_{i} p(y|x^{(i)})} \sum_{i=1}^K \nabla_{x_t} p(y \mid x = g_t(x_t; \epsilon^{(i)})).
    \label{eq:reparameterization-estimator}
\end{align}
Here, $\epsilon^{(i)} \sim p(\epsilon)$ samples the randomness in the sampler $g$ for samples $i = 1, \dots, K$. 
While naive diffusion sampling such as DDPM \cite{ho2020denoising} can be used to sample from $x \sim p(x \mid x_t)$, we use few- or one-step models as detailed in \cref{sec:fast-sample-generation}.  We call \cref{eq:reparameterization-estimator} \textbf{Gradient-Based Calibrated Bayesian Guidance}.

In practice, we work with log-likelihoods $\log p(y \mid x)$ and average using $\operatorname{softmax}(w_i)$ for numeric stability.

\Cref{eq:reparameterization-estimator} yields a consistent sampling procedure: as $K \to \infty$, any bias vanishes. This stands in contrast to the estimators considered in \cref{sec:diffused-likelihoods}, which remain biased even asymptotically. While consistency requires averaging over multiple samples to obtain a reliable guidance signal, the additional computation can be viewed as the necessary cost of eliminating systematic bias rather than a fundamental limitation of the estimator itself.

\subsection{Non-Differentiable Rewards}
\label{sec:reinforce}

The reparameterization estimator \cref{eq:reparameterization-estimator} has the major drawback that it is not compatible when computing gradients through the sampling process $p(x \mid x_t)$ and/or the likelihood $p(y \mid x)$ is computationally expensive or intractable.

To this end, we propose a REINFORCE estimator \cite{williams1992simple} to compute the gradient of the diffused likelihood in \cref{eq:noisy-posterior-score}:
\begin{align}&
    \nabla_{x_t} \log p(y \mid x_t) \notag \\&
    = \frac{1}{p(y \mid x_t)}\nabla_{x_t} \int p(x \mid x_t) p(y \mid x) dx  \\&
    = \frac{1}{p(y \mid x_t)} \int p(x \mid x_t) p(y \mid x) \nabla_{x_t} \log p(x \mid x_t) dx. \notag 
\end{align}

Inserting into \cref{eq:noisy-posterior-score}, we absorb $\nabla_{x_t} \log p(x_t)$ into the integral and find for the diffused posterior score:
\begin{align}&
    \nabla_{x_t} \log p(x_t \mid y)  \\&
    = \frac{1}{p(y \mid x_t)}\int p(x \mid x_t) p(y \mid x) \nabla_{x_t} \log p(x_t \mid x) dx.  \notag
\end{align}

By \cref{eq:diffusion-forward}, we can replace
\begin{align}&
    \nabla_{x_t} \log p(x_t \mid x)  
    = \frac{a_t x - x_t}{b_t^2}.
\end{align}

Evaluated using a finite set of samples $x^{(i)} \sim p(x \mid x_t)$ for $i = 1, \dots, K$, we find our estimator:
\begin{equation}
    \nabla_{x_t} \log p(x_t \mid y)
    \approx \frac{1}{\sum_i w_i} \sum_{i=1}^K w_i \frac{a_t x^{(i)} - x_t}{b_t^2}.
    \label{eq:reinforce-estimator}
\end{equation}
The likelihood enters as weights $w_i = p(y \mid x^{(i)})$. We call \cref{eq:reinforce-estimator} \textbf{Gradient-Free Calibrated Bayesian Guidance}.

\Cref{eq:reinforce-estimator} is again a consistent estimator, meaning that it reduces bias with more samples. Again, the estimator relies on samples $x^{(i)} \sim p(x \mid x_t)$. In the next section, we discuss how to obtain these samples efficiently.

\subsection{Efficient Approximate Diffusion Posterior $p(x \mid x_t)$}
\label{sec:fast-sample-generation}

Both estimators in \cref{eq:reparameterization-estimator,eq:reinforce-estimator} rely on samples from the diffusion posterior $x \sim p(x \mid x_t)$. The straightforward way to obtain them is stochastic diffusion sampling such as DDPM \cite{ho2020denoising}.
However, this scales poorly: drawing $K$ samples from $x_t$ down to $x$ with an $N$-step sampler, across an outer denoising loop of $N$ noise levels, requires $K N (N-1)/2 \in \mathcal{O}(K N^2)$ model evaluations.
This can quickly run into computational bottlenecks, especially for large $K$ and large models.

One mitigation is to limit the number of DDPM iterations for the per-step sample generation to a maximum of $M$. This reduces the quadratic complexity to a linear dependence $O(MNK)$ on the total number of diffusion steps. Empirically, we found that this yields better quality for the same compute than reducing $K$ instead.

To reduce compute further, we propose to either use an analytic expression for $p(x \mid x_t)$ if available (such as in the Bayesian inference tasks in \cref{sec:sbi}), or to use a pretrained one-step or few-step model \cite{song2023consistency,draxler2024freeform,geng2025mean,frans2025one} to sample from it.
These approaches reduce the cost to $\mathcal{O}(K N)$.

In detail, we propose to use \textit{deterministic} one-step models $x = f_t(x_t)$ to sample from the stochastic posterior $p(x \mid x_t)$ via renoising:
\begin{equation}
    x^{(i)} = f_t\!\left( a_t \, f_t(x_t) + b_t \, \epsilon^{(i)} \right), \quad \epsilon^{(i)} \sim \mathcal{N}(0, I).
    \label{eq:renoising}
\end{equation}
Here, $a_t, b_t$ are the schedule coefficients from \cref{eq:diffusion-forward}, and $\epsilon^{(i)}$ is independent noise.
\Cref{eq:renoising} applies $f_t$ to obtain a deterministic prediction $\hat x = f_t(x_t)$, perturbs it back to noise level $t$ with independent noise $\epsilon^{(i)}$, and denoises once more.
The renoising step itself has been used previously in other contexts \cite{wang2022zeroshotimagerestorationusing, yu2023freedomtrainingfreeenergyguidedconditional, lugmayr2022repaintinpaintingusingdenoising}.
Even though this construction is heuristic, we empirically observe identical performance to using inner diffusion rollouts at orders of magnitude faster inference (e.g.~\cref{tab:bhi_results_100_psnr}).
Concurrent to this work, \citet{potaptchik2026meta} propose to \textit{learn} a stochastic one-step sampler $x = f_t(x_t, \epsilon)$ for $p(x \mid x_t)$ instead. However, at the time of writing, these models are not readily available for many modalities, while deterministic one-step models often are.


\subsection{Comparing Gradient-Free and Gradient-Based Methods}

One might think that being based on REINFORCE, the gradient-free estimator in \cref{eq:reinforce-estimator} has larger variance than the gradient-based estimator in \cref{eq:reparameterization-estimator}, as discussed in \cite{paisley2012variational,ruiz2016generalized,miller2017reducing}.
\begin{figure*}[t]
    \centering
    \includegraphics[width=\linewidth]{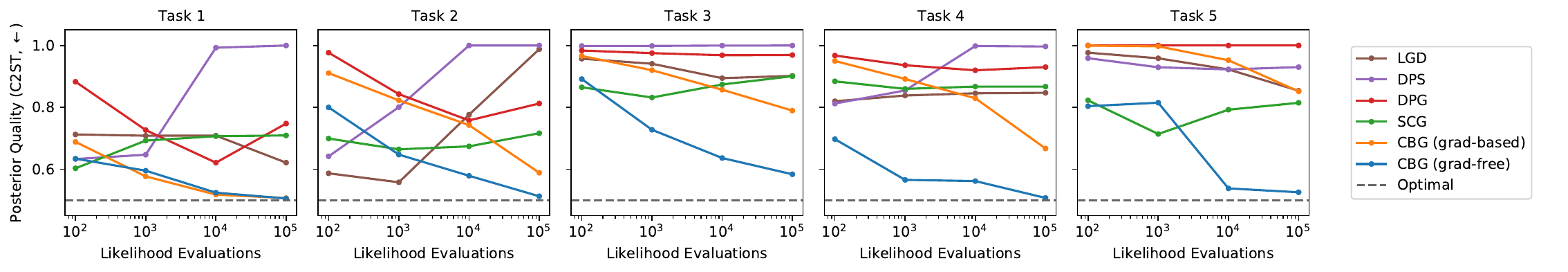}
    \caption{Empirical performance on Bayesian Inference tasks. Performance is measured in C2ST   (lower is better, $\downarrow$) \cite{friedman2004c2st}, comparing the distribution of guided samples to those of ground truth samples. Our methods improve performance with more compute, while other test-time adaptation methods are limited due to their approximations to diffused likelihood gradients.}
    \label{fig:sbi-c2st}
\end{figure*}

\begin{table*}[t]
    \centering
    \caption{Best method performance on Bayesian Inference C2ST ($\downarrow$), standard deviations in parentheses. }
    \small
    \resizebox{\textwidth}{!}{
    \begin{tabular}{lcccccc}
        \toprule
        Algorithm & Task 1 & Task 2 & Task 3 & Task 4 & Task 5 & Average \\
        \midrule
        \multicolumn{5}{l}{\textit{Diffusion Methods}} \\
        CBG (gradient-free, ours)           & 0.505 (0.004) & 0.513 (0.009)          & 0.584 (0.063)          & \textbf{0.507 (0.006)} & \textbf{0.525 (0.028)} & \textbf{0.527}\\
        CBG (gradient-based, ours)  & 0.507 (0.005)          & 0.589 (0.048)          & 0.789 (0.089)          & 0.667 (0.019)          & 0.852 (0.038) & 0.681\\
        DPS~\cite{chung2023diffusion}                       & 0.633 (0.024)          & 0.641 (0.032)          & 0.998 (0.002)          & 0.812 (0.023)          & 0.929 (0.045) & 0.803\\
        LGD \cite{song2023lossguided}                       & 0.621 (0.049)          & 0.558 (0.013)          & 0.894 (0.042)          & 0.820 (0.028)          & 0.854 (0.117) & 0.750\\
        DPG \cite{tang2023solving}                       & 0.621 (0.078)          & 0.758 (0.027)          & 0.968 (0.015)          & 0.920 (0.023)          & 1.000 (0.000) & 0.872\\
        SCG \cite{huang2024symbolicmusic}                       & 0.603 (0.037)          & 0.664 (0.028)          & 0.832 (0.027)          & 0.860 (0.017)          & 0.714 (0.131) & 0.735\\
        Langevin+Noisy Classifier \cite{song2019generative} & 0.506 (0.002) & 0.520 (0.005) & 0.633 (0.017) & 0.642 (0.083) & 0.538 (0.030)  & 0.568\\
        Langevin+REINFORCE \cite{song2019generative} & \textbf{0.502 (0.003)} & 0.514 (0.004) & 0.604 (0.009) & 0.705 (0.015) & 0.540 (0.013) & 0.573\\
        MCMC No-U-Turn Sampler \cite{hoffman2014nuts} & 0.514 (0.005) & 0.511 (0.004) & 0.627 (0.005) & 0.512 (0.007) & 0.603 (0.046) & 0.553\\
        \midrule
        \multicolumn{5}{l}{\textit{Likelihood-free Methods}} \\
         NLE  \cite{papamakarios2019bsequential}                      & 0.515 (0.009)          & \textbf{0.506 (0.004)} & 0.699 (0.069)          & 0.731 (0.023)          & 0.668 (0.094) & 0.624 \\
        NPE \cite{papamakarios2016fast}                       & 0.506 (0.004)          & 0.509 (0.005)          & 0.831 (0.052)          & 0.555 (0.015)          & 0.542 (0.024) & 0.589\\
        NRE \cite{hermans2020likelihood}                       & 0.536 (0.031)          & 0.631 (0.034)          & 0.919 (0.039)          & 0.734 (0.019)          & 0.629 (0.056) & 0.690\\
        REJ-ABC \cite{pritchard1999population}                    & 0.802 (0.023)          & 0.909 (0.028)          & 0.961 (0.020)          & 0.772 (0.034)          & 0.664 (0.040) & 0.822\\
        SMC-ABC \cite{beaumont2009adaptive}                   & 0.726 (0.032)          & 0.794 (0.041)          & 0.963 (0.018)          & 0.664 (0.011)          & 0.663 (0.048) & 0.762\\
        SNLE  \cite{papamakarios2019bsequential}                     & 0.519 (0.008)          & 0.509 (0.005)          & \textbf{0.578 (0.029)} & 0.624 (0.089)          & 0.571 (0.050) & 0.560 \\
        SNPE \cite{greenberg2019automatic}                     & 0.507 (0.004)          & 0.507 (0.005)          & 0.666 (0.061)          & 0.533 (0.014)          & 0.530 (0.026) & 0.545\\
        SNRE \cite{hermans2020likelihood}                      & 0.515 (0.004)          & 0.536 (0.007)          & 0.721 (0.058)          & 0.542 (0.016)          & 0.563 (0.032) & 0.575\\
        \bottomrule
    \end{tabular}
    }
    \label{tab:sbi}
\end{table*}

Interestingly, this general rule does not apply to our estimators, since they involve a self-normalization term of dividing by the sum of the weights. Empirically, we find that the variance of the gradient-free estimator in \cref{eq:reinforce-estimator} is in general lower than that of the gradient-based estimator in \cref{eq:reparameterization-estimator}. See Section~\ref{app:variance-comparison} for a concrete example.

The reason is that the two estimators behave quite differently in the presence of sharp likelihoods:
The gradient-free estimator reduces to a weighted sum, as can be seen by extracting Tweedie's estimate from \cref{eq:reinforce-estimator}:
\begin{equation}
    \EE[x \mid x_t, y] \approx \sum_i \frac{w_i}{\sum_j w_j} x^{(i)}.
\end{equation}

As the weights $w_i$ are typically of different orders of magnitude in practice, it often considers only the subset of samples with relatively high likelihood. The diffusion process will then move towards these samples in a region of high likelihood, discarding the others.

For the gradient-based estimator in \cref{eq:reparameterization-estimator}, the estimate is dominated by gradients associated with the highest-likelihood samples. These gradients can still be large, since small gradients occur only in a narrow neighborhood around the likelihood maximizer. Moreover, they are effectively amplified because the normalization term is determined by weights that are smaller than the (unobserved) maximum likelihood value:
\begin{equation}
    \nabla_{x_t} \log p(y \mid x_t) \approx \sum_i \frac{\nabla_{x_t} w_i}{\sum_j w_j}
\end{equation}
As a consequence, a large gradient can induce a disproportionately large update. Reducing this effect requires a larger number of samples, so that local gradients average out into a stable global direction and the maximum observed weight becomes a less noisy estimate of the true tail behavior.

From a practical perspective, the reparameterization estimator also requires computing gradients through the diffusion sampling, which increases computational cost and increases memory consumption. We therefore do not consider it for high-dimensional experiments.

\section{Experiments}

We now confirm the quality of our framework for practical Bayesian inference tasks. We find that CBG outperforms other test-time guidance methods on tasks where the prior and the likelihood are given as closed-form expressions, as well as on a scientific inverse problem where the prior is given by a pretrained diffusion model of black hole images.


\subsection{Bayesian Inference Benchmark}
\label{sec:sbi}

In this experiment, we evaluate a diverse collection of diffusion guidance methods on a benchmark of Bayesian inverse problems proposed by \citet{lueckmann2021benchmarking}. This benchmark is designed to test methods on their fit to the true Bayesian posterior $p(x \mid y)$ at varying computational budgets. 
To this end, the benchmark provides reference samples from the true posterior.

For the diffusion posterior $p(x \mid x_t)$ in \cref{eq:reparameterization-estimator,eq:reinforce-estimator} as well as for the posterior mean $\EE[x \mid x_t]$ in \cref{eq:tweedie-estimator,eq:noisy-tweedie-estimator}, we leverage closed-form expressions from the true prior $p(x)$ of these tasks (see Section~\ref{app:bayesian-inference-tasks}). This is possible because the prior distributions are analytic expressions such as normal and uniform distributions. This makes the experiments blazingly fast: A single inference takes on the order of milliseconds.

\Cref{fig:sbi-c2st} shows how both our gradient-free and gradient-based estimators improve their C2ST towards the optimal value of $0.5$ as the compute budget is increased.
Other test-time guidance methods generally do not improve performance with more compute.

Additionally, \cref{tab:sbi} shows the best classifier score for each method over all hyperparameters. The gradient-free method achieves the best distributional fit for every task by far compared to all other test-time guidance methods.
We also evaluate against a collection of likelihood-free methods that rely only on samples $y^{(i)} \mid x$ and a target $y$, instead of an exact likelihood $p(y \mid x)$. Our CBG method also outperforms these likelihood-free methods on average.

Together, we see that our estimators perform well on Bayesian inverse tasks, and outperform a significant collection of likelihood-based and likelihood-free methods.

\subsection{Black Hole Imaging}
\label{sec:black_hole}

\begin{figure}[h]
    \centering
    \includegraphics[width=\linewidth]{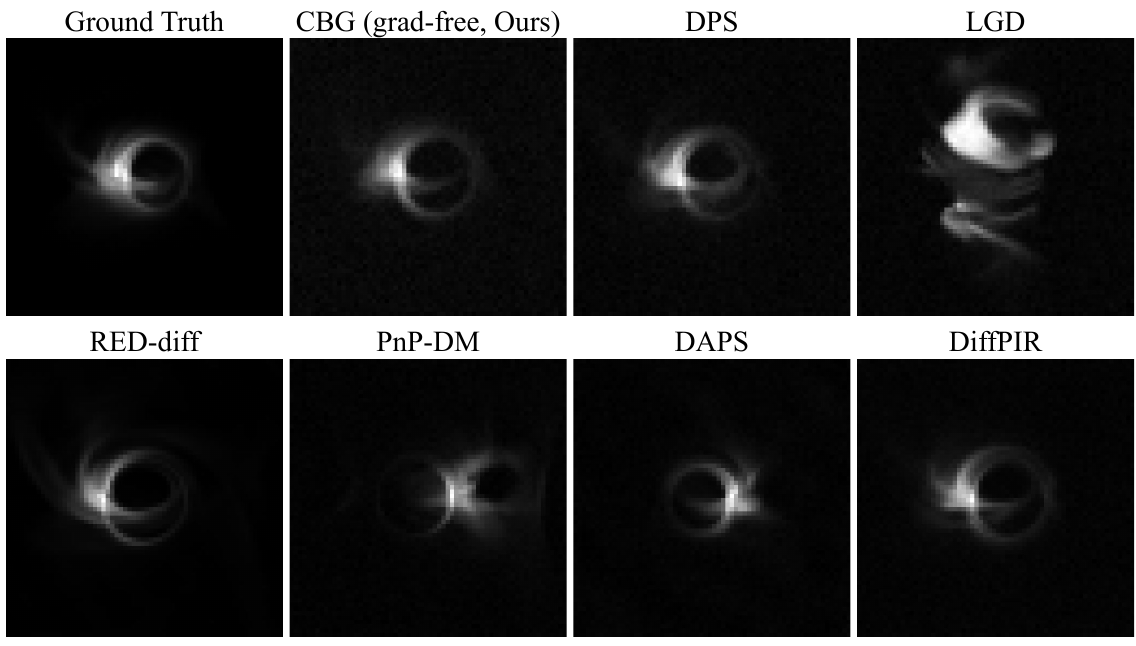}
    \caption{Uncurated comparison of CBG+DDPM with other test-time guidance methods on the black-hole imaging task proposed by \citet{lueckmann2021benchmarking}. Despite computing no likelihood gradient, CBG is able to reconstruct the ground truth samples well.}
    \label{fig:inference_comparison_blackhole}
    \vspace{-3mm}
\end{figure}

In this experiment, we apply our method to scientific inference, where having a calibrated posterior is important. In particular, we choose the black hole imaging task proposed by \citet{zheng2025inversebench} with a diffusion prior trained on images by \citet{Mizuno_2022}.
The task provides 100 problem instances, each consisting of a reference solution $x^*$ as well as a corresponding radio-telescope measurement $y$.
The likelihood function $p(y \mid x)$ measures the compatibility of an image $x$ with a given observation $y$. 
The task is to reconstruct the image from that measurement.

%
%


We use the gradient-free estimator in \cref{eq:reinforce-estimator}, as we find the gradient-based estimator to require many samples. To evaluate the estimator, we compare sampling $x \sim p(x \mid x_t)$ using the pretrained diffusion model with a fixed number of inner steps $M$ and using the renoise trick (compare \cref{sec:fast-sample-generation}). For the latter, we train a mean-flow model \cite{pixelmeanflows} using samples from the original diffusion model. See \cref{app:blackhole_exp_details} for details.

\Cref{tab:bhi_results_100_psnr} shows that our method matches the state of the art at similar wall-clock time, and significantly outperforms them as the number of per-step samples $K$ is increased.
%
\Cref{fig:inference_comparison_blackhole} visualizes uncurated qualitative reconstruction results.
While some of the baselines produce results that are either noticeably blurred or fail to faithfully follow the ground-truth, our method yields results that are visually consistent with the ground truth.
This confirms that our method scales well to high-dimensional scientific inverse tasks.

\begin{table}
    \centering
    \caption{Peak signal-to-noise ratio (PSNR, higher is better) on 100 black hole imaging tasks.
    Standard deviations over tasks in parentheses. Per-sample inference time measured on a single
    NVIDIA TITAN RTX (24\,GB). SMILI and EHT-Imaging are CPU-based methods whose runtime can't be reliably compared.
    For our method, \emph{DDPM} denotes a DDPM inner loop, while \emph{renoise} variants use the one-step sampler from \cref{sec:fast-sample-generation}.}
    \small
    \begin{tabular}{lcc}
        \toprule
        Methods & PSNR ($\uparrow$) & Time \\
        \midrule
        \multicolumn{3}{l}{\textit{Traditional Methods}} \\
        SMILI~\cite{chandra2015simple}                       & 22.67 (3.13)         & -- \\
        EHT-Imaging~\cite{chael2018interferometric}          & 24.28 (3.63)         & -- \\
        \midrule
        \multicolumn{3}{l}{\textit{Test-time Guidance Methods}} \\
        DPS~\cite{chung2023diffusion}                         & 25.86 (3.90)         & 24\,s \\
        LGD~\cite{song2023lossguided}                         & 21.22 (3.64)         & 36\,s \\
        RED-diff~\cite{mardani2023variational}                & 23.77 (4.13)         & 16\,s \\
        PnP-DM~\cite{wu2024principled}                        & 26.07 (3.70)         & 45\,s \\
        DAPS~\cite{zhang2024improvingdiffusioninverseproblem} & 25.60 (3.64)         & 27\,s \\
        DiffPIR~\cite{zhu2023denoising}                       & 25.01 (4.64)         & 21\,s \\
        \cmidrule(lr){1-3}
        \multicolumn{3}{l}{\textit{CBG (gradient-free, ours)}} \\
        \quad + DDPM ($K{=}512$)                               & 26.10 (3.83)          & $\sim$~23\,h \\
        \quad + renoise ($K{=}128$)                          & 25.36 (3.99)          & 48\,s \\
        \quad + renoise ($K{=}512$)                          & 26.10 (3.84)          & 2.9\,min \\
        \quad + renoise ($K{=}4096$)                         & 26.39 (3.81)          & 22.9\,min \\
        \quad + renoise ($K{=}65536$)                        & \textbf{27.08} (3.74) & $\sim$~  6\,h \\
        \bottomrule
    \end{tabular}
    \label{tab:bhi_results_100_psnr}
\end{table}

\begin{figure}[h]
    \centering
    \includegraphics[width=\linewidth]{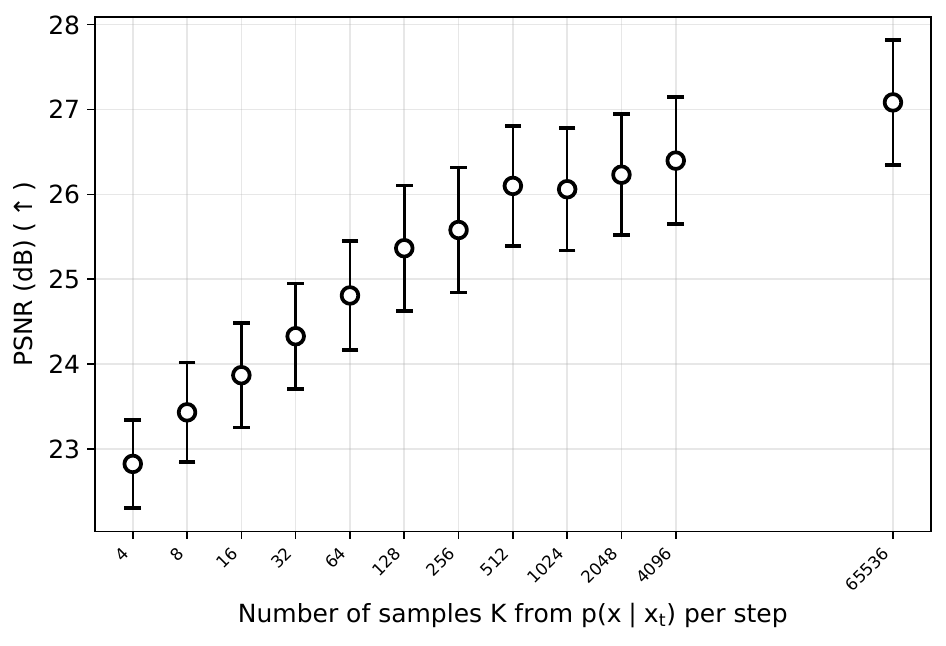}
    \caption{Increasing the number of candidate $x$ samples $K$ per denoising step in the gradient-free CBG estimator consistently increases PSNR (higher is better) on black hole imaging. Error bars are 95\% confidence intervals over 100 tasks.}
    \label{fig:ablation_k}
\end{figure}

\subsection{Image Inverse Problems}

\begin{figure}[h!]
    \centering
    \includegraphics[width=0.9\linewidth]{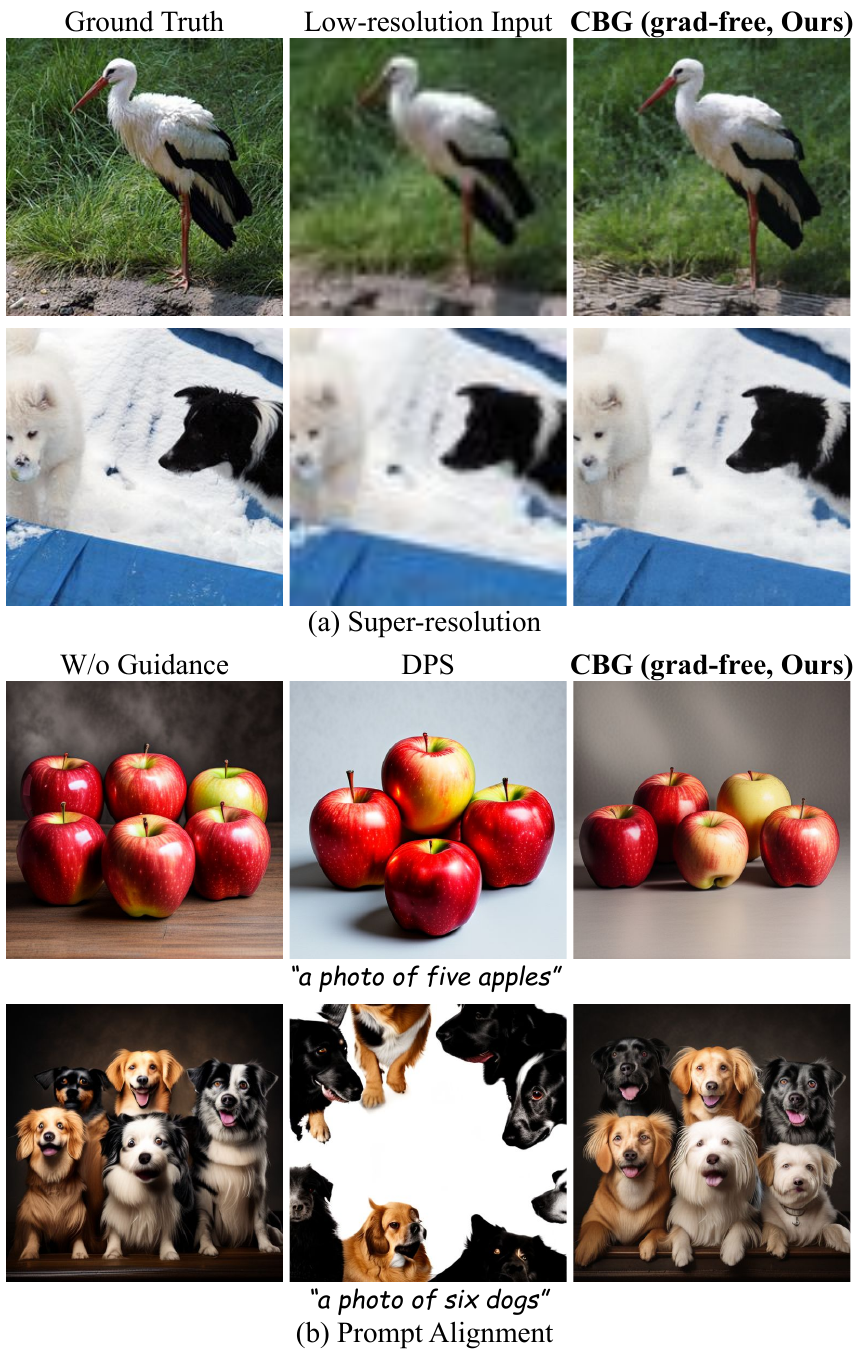}
    \caption{
    We show that CBG is also applicable in the image domain. 
    (a) For image super-resolution, the proposed method effectively reconstructs high-resolution images ($256 \times 256$) from the low-resolution inputs ($64 \times 64$).
    (b) For prompt alignment, we consider text-to-image generation with exact object-count constraints.
    Our method satisfies the prompt while preserving image quality, demonstrating the effectiveness of calibrated guidance.
    In contrast, DPS~\cite{chung2023diffusion} produces incorrect counts or distorted images.
    }
    \label{fig:image-inverse}
\end{figure}

In the domain of natural images, the generation of visually pleasing results that score high in likelihood is often the primary goal, while distributional aspects like compliance with the true posterior are secondary.
Still, as a proof of concept, we conduct qualitative experiments demonstrating that the proposed method also applies to the image domain.
Specifically, we consider two tasks: (a) super-resolution, and (b) prompt alignment.
For super-resolution, we consider a $4\times$ upscaling setting, reconstructing $256 \times 256$ images from $64 \times 64$ low-resolution inputs on natural images from the ImageNet test set \cite{deng2009imagenet}. 
We use a pixel mean flow model~\cite{pixelmeanflows} trained on ImageNet, and compute the likelihood by measuring the difference between the low-resolution input and the downsampled version.
For prompt alignment, we consider a scenario in which a pretrained SANA model~\cite{xie2025sana} is prompted to generate images with precise object counts specified in the text prompt.
To compute the likelihood, we use a pretrained vision-language reward model~\cite{wang2505skywork}, following the setup of FMTT~\cite{sabour2025fmtt}.
Detailed experimental settings are described in \cref{app:image_inverse_exp_details}.

As shown in Figure~\ref{fig:image-inverse}, in the super-resolution task, CBG successfully reconstructs the high-resolution ground truth images.
In the prompt alignment task, our method consistently generates images with the correct object count, whereas the DPS baseline~\cite{chung2023diffusion} results in suboptimal outcomes.
These highlight the practical benefit of CBG in image-domain inverse problems.

\section{Conclusion}

In this work, we address an important gap in the literature of test-time guidance for diffusion models: The common approximations for the diffused likelihoods as well as gradient rescaling can be biased, preventing sampling from the true Bayesian posterior. This may not pose problems for applications such as natural image inverse synthesis, where maximal adherence to the likelihood is preferred over calibrated inference in a Bayesian sense. For scientific scenarios however, proper uncertainty calibration is essential.

We propose two novel estimators that allow arbitrary reduction of this bias by leveraging the strong signal from evaluating the likelihood function on actual diffusion samples. By increasing computational resources, our method converges to the true Bayesian posterior. The gradient-free estimator is particularly easy to adapt to new scenarios as it does not require computing likelihood and diffusion model gradients.


\textbf{Limitations.} Our method requires $p(y \mid x)$ to overlap with the prior in measure; if it concentrates on a measure-zero manifold, samples from $p(x \mid x_t)$ will almost surely miss it. In such cases, methods that over-optimize the likelihood can outperform our calibrated sampler on task metrics. Projecting $x \sim p(x \mid x_t)$ onto the manifold of high likelihood can be a potential solution \cite{sorrenson2024learning}, but is out of scope for this work.

\section*{Acknowledgements}

Stephan Mandt acknowledges funding from the National Science Foundation (NSF) through an NSF CAREER Award IIS-2047418, IIS-2007719, the NSF LEAP Center, and the Hasso Plattner Research Center at UCI, and the Chan Zuckerberg Initiative. This project was supported by the Chan Zuckerberg Initiative, and the Hasso Plattner Research Center at UCI.

We thank Prakhar Srivastava, Kushagra Pandey and Justus Will for their valuable feedback.



\section*{Impact Statement}

This paper presents work whose goal is to advance the field of Machine Learning. There are many potential societal consequences of our work, none which we feel must be specifically highlighted here.

\bibliography{references}
\bibliographystyle{icml2026}

\newpage

\appendix
\crefalias{section}{appendix}
\crefalias{subsection}{appendix}
\onecolumn







\section{Proofs}
\label{app:proofs}

\subsection{Proof for \texorpdfstring{\cref{thm:posterior-mean-biased}}{Theorem~\ref{thm:posterior-mean-biased}}}
\label{proof:posterior-mean-biased}

\begin{assumption}[Additional condition for \texorpdfstring{\cref{thm:posterior-mean-biased}}{Theorem~\ref{thm:posterior-mean-biased}}]
    \label{assump:posterior-mean-biased}
    Fix $0 < t < 1$, and let $\mu_t(x_t) = \EE[x \mid x_t]$. Define
    \begin{equation}
        M = \sup_{x \in \operatorname{supp} p} p(y \mid x),
        \qquad
        S = \{x \in \operatorname{supp} p : p(y \mid x) = M\}.
    \end{equation}
    Assume that $S \cap \mu_t(\RR^d) \neq \emptyset$ and $\mathbb P_{x \sim p(x)}[x \in S] < 1$.
\end{assumption}
For example, if $\operatorname{supp} p$ is compact, $x \mapsto p(y \mid x)$ is continuous, and the likelihood attains its maximum at an interior point that lies in $\mu_t(\RR^d)$, then this assumption is fulfilled.

\paragraph{Detailed restatement.}
Under \cref{assump:posterior-mean-biased}, for every $0 < t < 1$, unless the likelihood $p(y \mid x)$ is constant in $x$, there exists some $x_t \in \RR^d$ at which the posterior mean approximation in \cref{eq:tweedie-estimator} fails.

\begin{proof}
    If the likelihood $p(y \mid x)$ is constant in $x$, then
    \begin{equation}
        p(y \mid x_t) = \int p(x \mid x_t) p(y \mid x) dx = \int p(x \mid x_t) p(y \mid x = 0) dx = p(y \mid x = 0) \int p(x \mid x_t) dx = p(y \mid x = 0).
    \end{equation}

    We now prove the theorem by contradiction under \cref{assump:posterior-mean-biased}, assuming that equality holds in \cref{eq:tweedie-estimator} for all $x_t \in \RR^d$.
    
    First, the diffusion posterior $p(x_t \mid x)$ forms an exponential family:
    \begin{equation}
        p(x \mid x_t) \propto p(x) \exp\!\left(- \frac{1}{2b_t^2} \| x_t - a_t x \|^2 \right).
    \end{equation}
    With the natural parameter $\eta = \frac{a_t}{b_t^2} x_t$ and $h(x) = p(x) \exp(-a_t^2/(2b_t^2) \|x\|^2)$, we get the exponential family:
    \begin{equation}
        f_X(x \mid \theta) = h(x) \exp(\eta \cdot x - A(\eta)).
    \end{equation}
    Since the Gaussian kernel is strictly positive, $p(x \mid x_t)$ and $p(x)$ have the same support for all $x_t \in \RR^d$. The natural parameter $\eta$ is a bijection of $x_t$ (on $0 < t< 1$), and so the exponential family is minimal. Therefore, the function $\mu_t(x_t) = \EE_{x \sim p(x \mid x_t)}[x] = \nabla_\eta A(\eta)$ is a bijection onto its image \citep[Proposition 3.2]{wainwright2008graphical}.

    Let $x^\star \in S \cap \mu_t(\RR^d)$. By bijectivity onto the image, there exists $x_t^\star \in \RR^d$ with $\mu_t(x_t^\star) = x^\star$. Since $x^\star \in S$, it holds that $p(y \mid x^\star) = M$.

    Under our contradiction assumption, \cref{eq:tweedie-estimator} holds at $x_t^\star$, and so:
    \begin{equation}
        p(y \mid x_t^\star) = p(y \mid x = \mu_t(x_t^\star)) = p(y \mid x^\star) = M.
    \end{equation}
    On the other hand,
    \begin{equation}
        p(y \mid x_t^\star) = \int p(x \mid x_t^\star) p(y \mid x) dx.
    \end{equation}

    Since $\mathbb P_{x \sim p(x)}[x \in S] < 1$, the set
    \begin{equation}
        A = \operatorname{supp} p \setminus S = \{x \in \operatorname{supp} p : p(y \mid x) < M\}
    \end{equation}
    has positive prior probability. Since $p(x \mid x_t^\star) \propto p(x) p(x_t^\star \mid x)$ and the Gaussian factor $p(x_t^\star \mid x)$ is strictly positive for all $x$, the set $A$ also has positive posterior probability under $p(x \mid x_t^\star)$. Therefore,
    \begin{equation}
        p(y \mid x_t^\star) = \int p(x \mid x_t^\star) p(y \mid x) dx < \int p(x \mid x_t^\star) M dx = M,
    \end{equation}
    contradicting the previous equality. Hence \cref{eq:tweedie-estimator} must fail at $x_t^\star$.

\end{proof}


\subsection{Derivation of \texorpdfstring{\cref{thm:posterior-gaussian-biased}}{Example~\ref{thm:posterior-gaussian-biased}}}
\label{proof:posterior-gaussian-biased}

Consider the one-dimensional symmetric two-component Gaussian mixture prior
\begin{equation}
p(x)=\tfrac12\Nn(x;-\mu,1)+\tfrac12\Nn(x;\mu,1),
\qquad \mu>0,
\end{equation}
and the Gaussian likelihood slice centered at the symmetry point
\begin{equation}
p(y=0\mid x)=\Nn(0;x,\rho^2),
\qquad \rho^2>0.
\end{equation}
Fix $t\in(0,1)$, let $s=1-t$, and set
\begin{equation}
\sigma_t^2=\frac{t^2}{s^2+t^2},
\qquad
d=\frac{t^2}{s^2+t^2}\mu>0.
\end{equation}
We claim that the Gaussian proxy identity fails already at $x_t=0$.

At $x_t=0$, symmetry gives posterior weights $\tfrac12,\tfrac12$, and the posterior of each component is Gaussian with variance $\sigma_t^2$ and mean $\pm d$. Hence
\begin{equation}
p(x\mid x_t=0)=\tfrac12\Nn(x;-d,\sigma_t^2)+\tfrac12\Nn(x;d,\sigma_t^2).
\end{equation}
Its posterior mean is therefore
\begin{equation}
\EE[X\mid X_t=0]=0.
\end{equation}
So the Gaussian proxy is
\begin{equation}
\Nn(x;0,\sigma_t^2).
\end{equation}
Now use the Gaussian convolution identity
\begin{equation}
\int \Nn(0;x,\rho^2)\Nn(x;\nu,\sigma_t^2)dx
=
\Nn(0;\nu,\rho^2+\sigma_t^2).
\end{equation}
Therefore the exact posterior expectation equals
\begin{equation}
\tfrac12\Nn(0;-d,\rho^2+\sigma_t^2)
+
\tfrac12\Nn(0;d,\rho^2+\sigma_t^2)
=
\Nn(0;d,\rho^2+\sigma_t^2),
\end{equation}
whereas the proxy expectation equals
\begin{equation}
\Nn(0;0,\rho^2+\sigma_t^2).
\end{equation}
Since $d>0$, the Gaussian density $\Nn(0;\nu,\rho^2+\sigma_t^2)$ is strictly maximized at $\nu=0$, so
\begin{equation}
\Nn(0;d,\rho^2+\sigma_t^2)
<
\Nn(0;0,\rho^2+\sigma_t^2).
\end{equation}
Hence
\begin{equation}
\int p(y=0\mid x)p(x\mid x_t=0)dx
\neq
\int p(y=0\mid x)\Nn\left(x;\EE[X\mid X_t=0],\sigma_t^2\right)dx.
\end{equation}
Thus the Gaussian posterior proxy fails for this example.

\subsection{Proof for \texorpdfstring{\cref{thm:guidance-scale-biased}}{Theorem~\ref{thm:guidance-scale-biased}}}
\label{proof:guidance-scale-biased}

\paragraph{Detailed restatement.}
Fix $0 < t < 1$ and $\gamma \in \RR_+ \setminus \{0, 1\}$. If there is no constant $c \in \RR_+$ such that $p(y \mid x) = c$ for $p(x)$-almost every $x$, then there exists some $x_t \in \RR^d$ at which the naive tempered diffused likelihood \cref{eq:naive-diffused-likelihood} fails.

\begin{proof}
    Let $\phi(u)=u^\gamma$ for $u\geq 0$. For $\gamma\in(0,1)$, $\phi$ is strictly concave; for $\gamma>1$, $\phi$ is strictly convex.

    For fixed $x_t$, define $x\sim p(x\mid x_t)$ and $z=p(y\mid x)\in[0,\infty)$. Then
    \begin{equation}
        \int p(x \mid x_t) p(y \mid x)^\gamma dx = \left( \int p(x \mid x_t) p(y \mid x) dx\right)^\gamma
    \end{equation}
    is exactly $\EE[\phi(z)] = \phi(\EE[z])$. By \cref{lem:jensen-identity}, this equality holds if and only if $z$ is almost surely constant under $p(x\mid x_t)$.

    Assume now that the naive identity \cref{eq:naive-diffused-likelihood} holds for all $x_t$. Fix any $x_t^\star \in \RR^d$. Then there exists $c \in \RR_+$ such that $p(y\mid x) = c$ for $p(x\mid x_t^\star)$-almost every $x$. Since
    \begin{equation}
        p(x\mid x_t^\star) \propto p(x)\exp\!\left(-\frac{\|x_t^\star-a_t x\|^2}{2b_t^2}\right),
    \end{equation}
    and the Gaussian factor is strictly positive, $p(x\mid x_t^\star)$ and $p(x)$ have the same null sets. Hence
    \begin{equation}
        p(y\mid x)=c \qquad p(x)\text{-almost surely}.
    \end{equation}
    Therefore, if no such constant $c$ exists $p(x)$-almost surely, the naive identity cannot hold for all $x_t$. Hence there must exist some $x_t$ at which \cref{eq:naive-diffused-likelihood} fails.
\end{proof}

\begin{lemma}
    \label{lem:jensen-identity}
    Let $\phi:[0,\infty)\to\RR$ be strictly convex or strictly concave, and let $Z$ be an integrable random variable with values in $[0,\infty)$ such that $\phi(Z)$ is integrable. Then
    \begin{equation}
        \EE[\phi(Z)] = \phi(\EE[Z]) \quad \Longleftrightarrow \quad Z \text{ is almost surely constant}.
    \end{equation}
\end{lemma}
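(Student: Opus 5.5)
The easy direction is immediate. If $Z = c$ almost surely, then $\phi(Z) = \phi(c)$ almost surely, so $\EE[\phi(Z)] = \phi(c) = \phi(\EE[Z])$.

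For the converse, I will reduce the strictly concave case to the strictly convex one. Replacing $\phi$ by $-\phi$ preserves the hypothesis $\EE[\phi(Z)] = \phi(\EE[Z])$, so it suffices to treat strictly convex $\phi$.

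Set $m = \EE[Z] \in [0,\infty)$. The natural tool is a supporting line at $m$.
\begin{itemize}
  \item \textbf{Case $m > 0$.} Then $m$ is an interior point of $[0,\infty)$. A convex function has a finite one-sided derivative there, so I can choose a subgradient $\beta$ (for instance the right derivative $\phi'_+(m)$) and define the affine function $\ell(u) = \phi(m) + \beta (u - m)$. Convexity gives $\phi(u) \ge \ell(u)$ for all $u \ge 0$.
  \item \textbf{Case $m = 0$.} Since $Z \ge 0$ and $\EE[Z] = 0$, we get $Z = 0$ almost surely and are done. This also sidesteps the fact that a subgradient may fail to exist at the boundary point $0$ (the derivative could be $-\infty$ there).
\end{itemize}

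The key step is upgrading $\ge$ to a strict inequality away from $m$. I claim $\phi(u) > \ell(u)$ for every $u \neq m$. Suppose instead $\phi(u_0) = \ell(u_0)$ for some $u_0 \neq m$. Then $\phi$ agrees with $\ell$ at both $m$ and $u_0$, and $\phi \ge \ell$ everywhere. Convexity makes $\phi$ lie below the chord between $m$ and $u_0$, and that chord is exactly the segment of $\ell$. Hence $\phi = \ell$ on the whole segment between $m$ and $u_0$, which contradicts strict convexity.

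With this in hand, consider the nonnegative random variable $D = \phi(Z) - \ell(Z) \ge 0$. It is integrable, because $\phi(Z)$ and $Z$ are. Linearity gives $\EE[\ell(Z)] = \ell(m) = \phi(m)$, so the hypothesis yields $\EE[D] = 0$. A nonnegative random variable with zero mean vanishes almost surely, so $D = 0$ almost surely. By the strict inequality above, this forces $Z = m$ almost surely, i.e.\ $Z$ is almost surely constant.

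The only delicate points are the existence of a finite subgradient and the boundary case $m = 0$; both are handled above by the case split on $m$.
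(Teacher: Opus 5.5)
Your proof is correct, but it takes a different route from the paper's. The paper argues by contrapositive. It assumes $Z$ is not almost surely constant and splits the sample space into $A=\{Z<m\}$ and $C=\{Z\ge m\}$, with $\lambda=\mathbb P(A)\in(0,1)$ and conditional means satisfying $m_-<m<m_+$ and $m=\lambda m_-+(1-\lambda)m_+$. It then chains the ordinary (non-strict) Jensen inequality on each piece with the two-point strict convexity inequality: $\EE[\phi(Z)]\ge\lambda\phi(m_-)+(1-\lambda)\phi(m_+)>\phi(m)$.

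You instead prove the equality case directly from a supporting line at $m$. The strict gap $\phi>\ell$ away from $m$, combined with $\EE[\phi(Z)-\ell(Z)]=0$, forces $Z=m$ almost surely.

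What the paper's route buys is brevity. It treats Jensen as a black box and never mentions subgradients. What yours buys is self-containment: that black box is usually proved by exactly your supporting-line argument. The boundary subtlety you isolate explicitly (no finite subgradient at $0$, dispatched by your case $m=0$) is absorbed silently into Jensen in the paper's proof. There it appears only when $m_-=0$, in which case $Z=0$ almost surely on $A$ and Jensen on $A$ is trivial.

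Your argument also gives more. For merely convex $\phi$ and $m>0$, the same computation shows that equality holds if and only if $Z$ lies almost surely in the contact set $\{u:\phi(u)=\ell(u)\}$. The lemma is the special case in which strict convexity shrinks that set to $\{m\}$.
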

\begin{proof}
    If $Z$ is almost surely constant, the identity is immediate.

    We now prove the converse. First assume that $\phi$ is strictly convex and that $Z$ is not almost surely constant. Let $m = \EE[Z]$ and define the events
    \begin{equation}
        A = \{Z < m\},
        \qquad
        C = \{Z \geq m\}.
    \end{equation}
    Since $Z$ is not almost surely constant and $m$ is its mean, both $\mathbb P(A)$ and $\mathbb P(Z > m)$ are positive. In particular, with $\lambda = \mathbb P(A) \in (0, 1)$,
    \begin{equation}
        m_- = \EE[Z \mid A],
        \qquad
        m_+ = \EE[Z \mid C]
    \end{equation}
    satisfy $m_- < m < m_+$ and
    \begin{equation}
        m = \lambda m_- + (1 - \lambda) m_+.
    \end{equation}

    Applying Jensen's inequality conditionally on $A$ and $C$, we obtain
    \begin{align}
        \EE[\phi(Z)]
        &= \lambda \, \EE[\phi(Z) \mid A] + (1 - \lambda) \, \EE[\phi(Z) \mid C] \\
        &\geq \lambda \, \phi(m_-) + (1 - \lambda) \, \phi(m_+) \\
        &> \phi(\lambda m_- + (1 - \lambda) m_+) \\
        &= \phi(m).
    \end{align}
    The strict inequality uses strict convexity and $m_- \neq m_+$. This contradicts $\EE[\phi(Z)] = \phi(\EE[Z])$.

    Therefore, if $\phi$ is strictly convex, equality can only hold when $Z$ is almost surely constant. If $\phi$ is strictly concave, then $-\phi$ is strictly convex, and applying the previous argument to $-\phi$ gives the same conclusion.
\end{proof}

\section{Algorithm}
\label{app:algorithm}
\begin{algorithm}[H]
  \caption{Calibrated Bayesian Guidance}
  \label{alg:example}
  \begin{algorithmic}
    \STATE {\bfseries Input:} Initial noise $x_T \sim \Nn(0,1)$
    \FOR{$t=T$ {\bfseries to} $1$}
    \STATE Draw samples from $p(x \mid x_t)$
    \STATE Compute $\nabla_{x_t} \log p(x_t \mid y)$ with \cref{eq:noisy-posterior-score,eq:reparameterization-estimator} or \cref{eq:reinforce-estimator}
    \STATE Compute $x_{t-1}$ with an Euler step along \cref{eq:diffusion-ode}
    \ENDFOR
  \end{algorithmic}
\end{algorithm}

\section{Gradient-Free and Gradient-Based Methods}
\label{app:variance-comparison}

We consider the case where we have a simple prior $p(x)=\Nn(x;2,1)$ and a simple likelihood $p(y|x)=\Nn(y;0,0.4^2)$. Denote $\hat{x}(x_t)$ as the exact analytic estimate for $\EE[x|x_t,y]$, and $\hat{x}_{\text{grad-free}}(x_t),\hat{x}_{\text{grad-based}}(x_t)$ our estimates from the CBG gradient-free and CBG gradient-based algorithms using Tweedie's formula. We plot the variances $\EE\left[\left(\hat{x}_{\text{grad-based}}(x_t)-\hat{x}(x_t)\right)^2\right]$ and $\EE\left[\left(\hat{x}_{\text{grad-free}}(x_t)-\hat{x}(x_t)\right)^2\right]$ as we vary $x_t$ and $t$. Both estimators use $K=1000$ samples, with the variance expectation estimated over $100$ algorithm calls. Looking at \cref{fig:reinforce_reparam_comparison}, there is a clear region to the left of $t \approx 0.3$ where the gradient-based method performs better, but noticeably not by much.

\begin{figure*}[h!]
    \centering
    \includegraphics[scale=0.7]{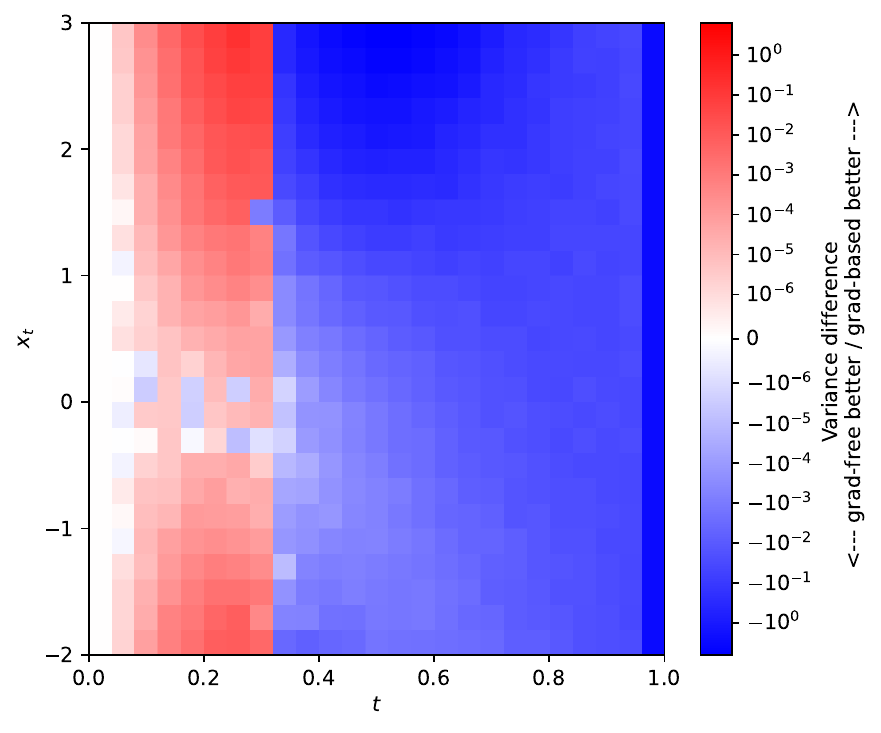}
    \caption{Variance comparison for gradient-free and gradient-based methods. In some regions, the gradient-based method has lower variance.}
    \label{fig:reinforce_reparam_comparison}
\end{figure*}

\section{Bias and Variance of Diffusion Methods}
\label{app:all-methods-bias-variance-comparison}

We consider Task 4 of the Bayesian Inference Benchmark, specifically estimating $\EE[x \mid x_t,y]$.
We initialize an $x_t$ from $p(x_t|y,t=0.5)$ and run one step of each diffusion sampling algorithm to estimate $\EE[x \mid x_t,y]$, with varying values for $K$.
We repeat this process $100$ times to estimate the bias and variance of each method compared to an analytic solution.
The results are shown in \cref{fig:all_methods_bias_variance_comparison}.
We see that all methods converge to $0$ variance as $K$ increases, but only our CBG methods (both gradient-free and gradient-based) converge to the correct analytic solution.
Notably, the gradient-free version of CBG converges much faster than the gradient-based version, which is consistent with the variance comparison in \cref{fig:reinforce_reparam_comparison}.

\begin{figure*}[h!]
    \centering
    \includegraphics[scale=0.7]{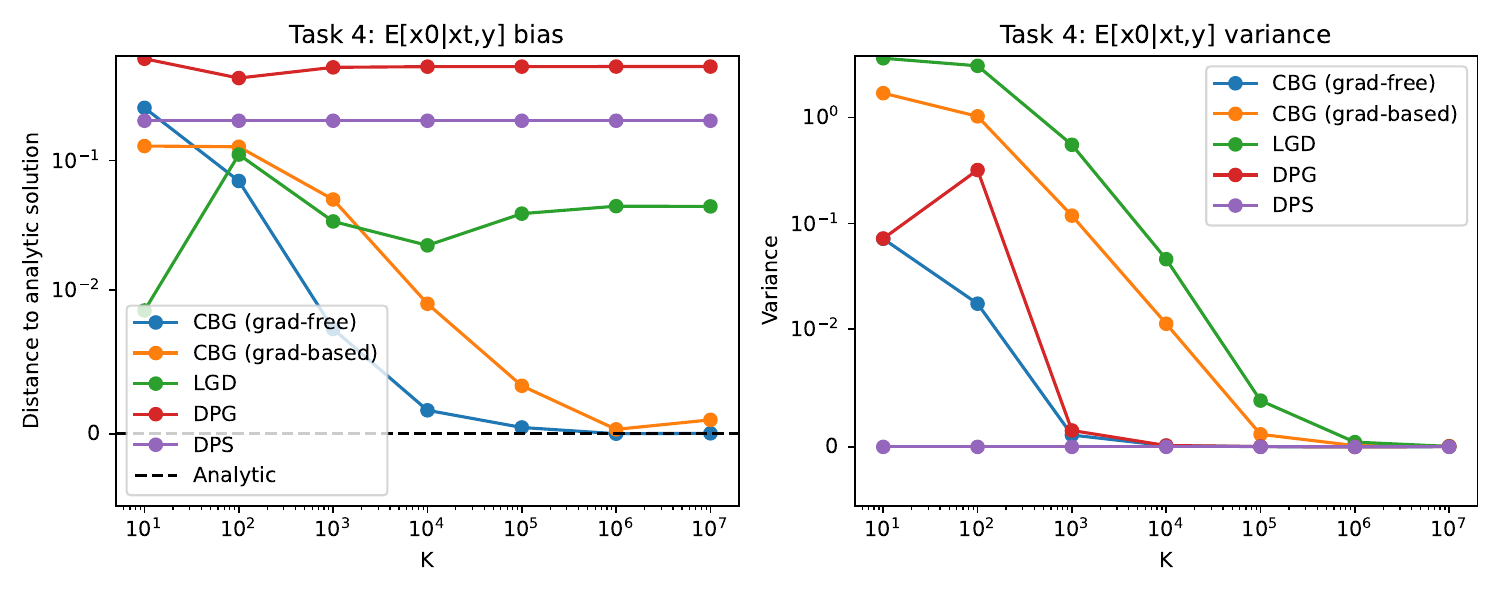}
    \caption{Bias and variance of $\EE[x \mid x_t,y]$ as a function of $K$ for Bayesian Inference Task 4. While the variance of all methods decreases to $0$ with more compute, only our CBG methods decrease to $0$ bias.}
    \label{fig:all_methods_bias_variance_comparison}
\end{figure*}

\section{Bayesian Inference Benchmark Guidance Methods}

\subsection{Diffusion Process}
For all guidance methods, the flow matching forward process was used:
$$p(x_t|x)=\Nn(x_t;(1-t)x,t^2 I)$$
For generation, $t$ is scheduled to linearly decrease from $1$ to $0$.

\subsection{Diffusion Posterior Sampling}

Diffusion Posterior Sampling (DPS) \cite{chung2023diffusion} approximates the score $\nabla_{x_t} \log p(y|x_t)$ as
$$\nabla_{x_t} \log p(y|x_t) \approx \nabla_{x_t} \log p\left(y|\EE\left[x|x_t\right]\right)$$

\subsection{Loss Guided Diffusion}

Loss Guided Diffusion (LGD) \cite{song2023lossguided} approximates the score $\nabla_{x_t} \log p(y|x_t)$ as
$$\nabla_{x_t} \log p(y|x_t) \approx \nabla_{x_t} \log \int p(y|x)q(x|x_t)dx$$
where $q(x|x_t)=\Nn\left(\EE\left[x|x_t\right], \frac{t^2}{(1-t)^2+t^2} I\right)$ approximates the true diffusion posterior distribution. Evaluated using a finite set of samples $x^{(i)} \sim q(x|x_t)$ for $i=1,\cdots,K$, we have:
$$\nabla_{x_t} \log p(y|x_t) \approx \nabla_{x_t} \log \left[\frac{1}{K} \sum_{i} p\left(y|x^{(i)}\right) \right]$$

\subsection{Diffusion Policy Gradient}

Diffusion Policy Gradient (DPG) \cite{tang2023solving} approximates the score $\nabla_{x_t} \log p(y|x_t)$ as
$$\nabla_{x_t} \log p(y|x_t) \approx C \frac{s(x_t)}{\|s(x_t)\|_2^2}$$
Thus:
$$s(x_t)=\EE_{x \sim q(x|x_t)} \left[ \left(p(y|x)-b\right) \nabla_{x_t} \left(-\|x - \EE\left[x|x_t\right]\|^2_2\right)\right]$$
where $q(x|x_t)=\Nn\left(\EE\left[x|x_t\right], \frac{t^2}{(1-t)^2+t^2} I\right)$ approximates the true diffusion posterior distribution. Evaluated using a finite set of samples $x^{(i)} \sim q(x|x_t)$ for $i=1,\cdots,K$, we approximate as:
$$s(x_t) \approx \frac{1}{K} \sum_{i} \left(p(y|x^{(i)})-b^{(i)}\right) \nabla_{x_t} \left(-\|x^{(i)} - \EE\left[x|x_t\right]\|^2_2\right)$$
where $b^{(i)}$ is computed as
$$b^{(i)} = \frac{1}{K-1} \sum_{j=1,j \neq i}^{K} p\left(y|x^{(j)}\right)$$

\subsection{Stochastic Control Guidance}

Stochastic Control Guidance (SCG) \cite{huang2024symbolicmusic} chooses an $\hat{x}_{t'}$ to transition to, given the current state $x_t$, as follows:
$$\hat{x}_{t'} = \argmax_{x_{t'}} p\left(y | \EE[x|x_{t'}]\right)$$
Approximating $p(x_{t'}|x_t)$ as $$q(x_{t'}|x_t)=\Nn\left(\frac{t'}{t} x_t + \left(1-\frac{t'}{t} \right) \EE[x|x_t], \sigma^2 I\right), \sigma^2 = \frac{(t-t')^2}{(1-t)^2+t^2}$$
we can evaluate this using a finite set of samples $x_{t'}^{(i)} \sim q(x|x_t)$ for $i=1,\cdots,K$ as:
$$\hat{x}_{t'} \approx \argmax_{i} p\left(y|\EE\left[x|x_{t'}^{(i)}\right]\right) $$

\subsection{Langevin + Noisy Classifier}
Langevin + Noisy Classifier performs annealed Langevin Sampling \cite{song2019generative} on $p(x_t)p(y|x=x_t)$, where a noisy image $x_t$ is passed to the classifier $p(y|x)$.

\subsection{Langevin + REINFORCE}
Langevin + REINFORCE performs annealed Langevin Sampling \cite{song2019generative} using an estimate of $\nabla_{x_t} \log p(x_t \mid y)$ generated by the REINFORCE estimator in our CBG gradient-free algorithm \eqref{eq:reinforce-estimator}.

\subsection{MCMC No-U-Turn Sampler}

MCMC No-U-Turn Sampler uses the No-U-Turn Sampler sampler from \cite{hoffman2014nuts} to sample directly on the analytic unnormalized distribution $p(x|y) \propto p(x)p(y|x)$.

\section{Bayesian Inference Benchmark Guidance Tasks}
\label{app:bayesian-inference-tasks}

\subsection{Task 1}

Inference on the mean of a 10-dimensional Gaussian, with a Gaussian prior.

\textbf{Prior}: $p(x)=\Nn\left(x;0,0.1I\right)$

\textbf{Diffusion Posterior}: $p(x\mid x_t)=\Nn\!\left(x;\ \frac{1-t}{10t^2+(1-t)^2}\,x_t,\ \frac{t^2}{10t^2+(1-t)^2}\,I\right).$

\textbf{Likelihood}: $p(y|x)=\Nn\left(x;y,0.1I \right)$

\textbf{Dimensionality}: $x \in \mathbb{R}^{10}$

\subsection{Task 2}

Inference on the mean of a 10-dimensional Gaussian, with a uniform prior.

\textbf{Prior}: $p(x)=\Uu(-1,1)$

\textbf{Diffusion Posterior}: $p(x\mid x_t)\propto \mathbf 1\{x\in[-1,1]^{10}\}\,\Nn\!\left(x;\ \frac{x_t}{1-t},\ \frac{t^2}{(1-t)^2}\,I\right).$

\textbf{Likelihood}: $p(y|x)=\Nn\left(y;x,0.1I \right)$

\textbf{Dimensionality}: $x \in \mathbb{R}^{10}, y \in \mathbb{R}^{10}$

\subsection{Task 3}

Inference on the parameters of a 10-dimensional Gaussian with nonlinear mean and variance functions, with a uniform prior.

\textbf{Prior}: $p(x)=\Uu(-3,3)$

\textbf{Diffusion Posterior}: $p(x\mid x_t)\propto \mathbf 1\{x\in[-3,3]^{5}\}\,\Nn\!\left(x;\ \frac{x_t}{1-t},\ \frac{t^2}{(1-t)^2}\,I\right).$

\textbf{Likelihood}: $p(y|x)=\prod_{i=1}^{4} \Nn\left(y^{(i)};\mu(x),\Sigma(x)\right)$

where $\mu(x)=\begin{bmatrix} x_1 \\ x_2\end{bmatrix}, \Sigma(x)=\begin{bmatrix} s_1^2 & \rho s_1 s_2 \\ \rho s_1 s_2 & s_2^2 \end{bmatrix}, s_1 = x_3^2, s_2=x_4^2, \rho = \tanh{x_5}$

\textbf{Dimensionality}: $x \in \mathbb{R}^{5}, y \in \mathbb{R}^{4 \times 2}$

\subsection{Task 4}

Inference on the shared mean of a mixture of two 2-dimensional Gaussians, one much broader than the other, with a uniform prior.

\textbf{Prior}: $p(x)=\Uu(-10,10)$

\textbf{Diffusion Posterior}: $p(x\mid x_t)\propto \mathbf 1\{x\in[-10,10]^{2}\}\,\Nn\!\left(x;\ \frac{x_t}{1-t},\ \frac{t^2}{(1-t)^2}\,I\right).$

\textbf{Likelihood}: $p(y|x)=0.5 \Nn\left( y;x,I\right)+0.5 \Nn\left( y;x,0.01I\right)$

\textbf{Dimensionality}: $x \in \mathbb{R}^2, y \in \mathbb{R}^2$

\subsection{Task 5}

Inference on a two-moons distribution with both bimodality and local structure, with a uniform prior.

\textbf{Prior}: $p(x)=\Uu(-1,1)$

\textbf{Diffusion Posterior}: $p(x\mid x_t)\propto \mathbf 1\{x\in[-1,1]^{2}\}\,\Nn\!\left(x;\ \frac{x_t}{1-t},\ \frac{t^2}{(1-t)^2}\,I\right).$

\textbf{Likelihood}: Let $s(x)=\begin{bmatrix}-\frac{|x_1+x_2|}{\sqrt{2}}\\ \frac{-x_1+x_2}{\sqrt{2}}\end{bmatrix}$, $z=y-s(x)$, $u=z_1-0.25$, $v=z_2$, $\rho=\sqrt{u^2+v^2}$, $\phi=\mathrm{atan2}(v,u)$. Then, for $\rho>0$,
\[
p(y\mid x)
=
\mathbf 1\!\left\{\phi\in\left(-\frac{\pi}{2},\frac{\pi}{2}\right)\right\}
\cdot \frac{1}{\pi\,\rho}
\cdot \frac{\Nn(\rho;0.1,0.01^2)}{\Phi(10)} .
\]

\section{Experimental Details for Bayesian Inference Benchmark}
Table 3 contains the set of hyperparameters tried for all methods, and table 4 contains the chosen hyperparameters to maximize distributional fit. All combinations of hyperparameters were tried, restricting to combinations satisfying $N \cdot K \leq 10^5$, which restricts the total number of likelihood evaluations. CBG does not tune a guidance scale since we aim to sample from the true distribution $p(x|y)$ at $\gamma=1$. Additionally, some other diffusion models do not use the maximum number of likelihood evaluations since they converge to the wrong distribution, and happen to get better results with less evaluations.

$10^4$ reference samples are given, and the methods generate $10^4$ samples as well.

For experimental details on likelihood-free methods, see \citet{lueckmann2021benchmarking}.

\begin{table*}
    \centering
    \caption{Bayesian inference benchmark hyperparameter sweep.}
    \begin{tabular}{lccccc}
        \toprule
        Methods & $N$ & $K$ & $\gamma$ & $C$ \\
        \midrule
        CBG (gradient-free) & $[10^1, 10^5]$ & $[10^1, 10^5]$ & N/A & N/A \\
        CBG (gradient-based) & $[10^1, 10^5]$ & $[10^1, 10^5]$ & N/A & N/A \\
        DPS & $[10^1, 10^5]$ & N/A & $[10^{-3}, 10^3]$ & N/A \\
        LGD & $[10^1, 10^5]$ & $[10^1, 10^5]$ & $[10^{-3}, 10^3]$ & N/A \\
        DPG & $[10^1, 10^5]$ & $[10^1, 10^5]$ & $[10^{-3}, 10^3]$ & $[10^{-1}, 10^1]$ \\
        SCG & $[10^1, 10^5]$ & $[10^1, 10^5]$ & $[10^{-3}, 10^3]$ & N/A \\
        \bottomrule
    \end{tabular}
\end{table*}

\begin{table*}[t]
\centering
\scriptsize
\setlength{\tabcolsep}{2.5pt}
\renewcommand{\arraystretch}{1.1}

\caption{Bayesian inference benchmark chosen hyperparameters.}
\begin{adjustbox}{max width=\textwidth}
\begin{tabular}{@{}l*{5}{cccc}@{}}
\toprule
& \multicolumn{4}{c}{Task 1} & \multicolumn{4}{c}{Task 2} & \multicolumn{4}{c}{Task 3} &
  \multicolumn{4}{c}{Task 4} & \multicolumn{4}{c}{Task 5} \\
\cmidrule(lr){2-5}\cmidrule(lr){6-9}\cmidrule(lr){10-13}\cmidrule(lr){14-17}\cmidrule(lr){18-21}
Methods & $N$ & $K$ & $\gamma$ & $C$
        & $N$ & $K$ & $\gamma$ & $C$
        & $N$ & $K$ & $\gamma$ & $C$
        & $N$ & $K$ & $\gamma$ & $C$
        & $N$ & $K$ & $\gamma$ & $C$ \\
\midrule
CBG (grad-free)   & 100 & 1000 & -- & -- & 100 & 1000 & -- & -- & 100 & 1000 & -- & -- & 100 & 1000 & -- & -- & 100 & 1000 & -- & -- \\
CBG (grad-based)  & 100 & 1000 & -- & -- & 100 & 1000 & -- & -- & 1000 & 100 & -- & -- & 1000 & 100 & -- & -- & 100 & 1000 & -- & -- \\
DPS               & 100 & --   & 0.022 & -- & 100 & -- & 0.022 & -- & 100 & -- & 0.005 & -- & 100 & -- & 0.005 & -- & 10000 & -- & 0.001 & -- \\
LGD               & 100 & 1000 & 2.154 & -- & 10 & 100 & 10.000 & -- & 10 & 1000 & 0.464 & -- & 10 & 10 & 0.100 & -- & 10 & 10000 & 0.464 & -- \\
DPG               & 10 & 1000 & 10.000 & 10.000 & 10 & 1000 & 2.154 & 10.000 & 100 & 100 & 0.001 & 0.100 & 100 & 100 & 0.464 & 1.000 & 100 & 10 & 215.443 & 10.000 \\
SCG               & 10 & 10 & -- & -- & 100 & 10 & -- & -- & 10 & 100 & -- & -- & 10 & 100 & -- & -- & 10 & 100 & -- & -- \\
\bottomrule
\end{tabular}
\end{adjustbox}
\end{table*}

\section{Experimental Details for Black Hole Imaging}
\label{app:blackhole_exp_details}


We evaluate two configurations of CBG on the black-hole benchmark of InverseBench~\cite{zheng2025inversebench}. In both cases, we use the gradient-free estimator from \cref{eq:reinforce-estimator}. We tune hyperparameters on the validation split and report all final numbers on 100 test samples. For baselines, we adopt the hyperparameter settings reported in Table~12 of the original InverseBench paper to ensure fair comparison.

For the \emph{full} setting in \cref{tab:bhi_results_100_psnr}, we use the pretrained diffusion model released with InverseBench to sample from $p(x \mid x_t)$ with a DDPM inner loop. The outer denoising process uses $N=1000$ steps. We select the inner-loop length as $M=50$ and the number of per-step samples based on validation performance. We use guidance scale $\gamma = 0.003$ for the reported full-model result. \Cref{tab:bhi_results_100_psnr} reports the TITAN RTX runtime used for the main comparison.

As \cref{tab:bhi_results_100_psnr} shows, the compute time with the pretrained DDPM model is still too long to be practical. We therefore additionally evaluate the renoising strategy from \Cref{sec:fast-sample-generation,eq:renoising} using a one-step mean-flow model. Because the training dataset for the black-hole images is not publicly released, we use the pretrained diffusion model above to generate $\sim 140\,000$ synthetic training samples, and train a mean-flow model \cite{pixelmeanflows} on this dataset. It has the same architecture as the DDPM model.

In this \emph{renoise} setting, proposal generation requires one model evaluation per sample instead of a $50$-step DDPM inner loop, plus one model call to denoise $\hat x = f_t(x_t)$. We also reduce the outer denoising process to $N=100$, which we found sufficient in practice, and sweep different values of $K$. The rows labeled \emph{renoise} in \cref{tab:bhi_results_100_psnr} report this $K$-sweep, and the additional PSNR and $\chi^2$ ablations in \cref{fig:ablation_k,fig:ablation_k_chi2} use the same setup.

\begin{figure}
    \centering
    \begin{subfigure}{0.49\linewidth}
        \centering
        \includegraphics[width=\linewidth]{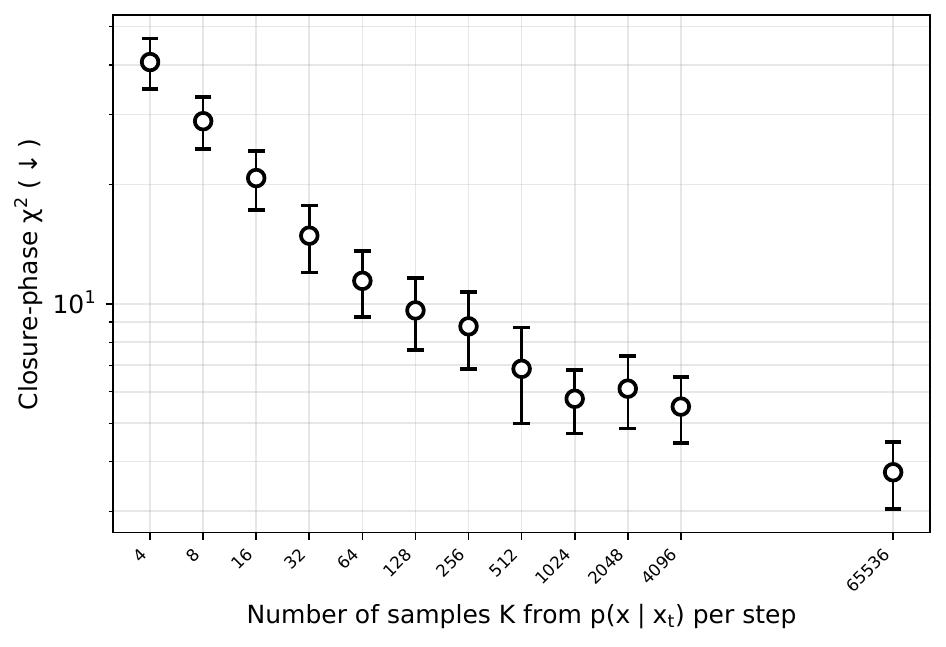}
        \caption{Closure-phase $\chi^2$ (lower is better).}
        \label{fig:ablation_k_cp}
    \end{subfigure}
    \begin{subfigure}{0.49\linewidth}
        \centering
        \includegraphics[width=\linewidth]{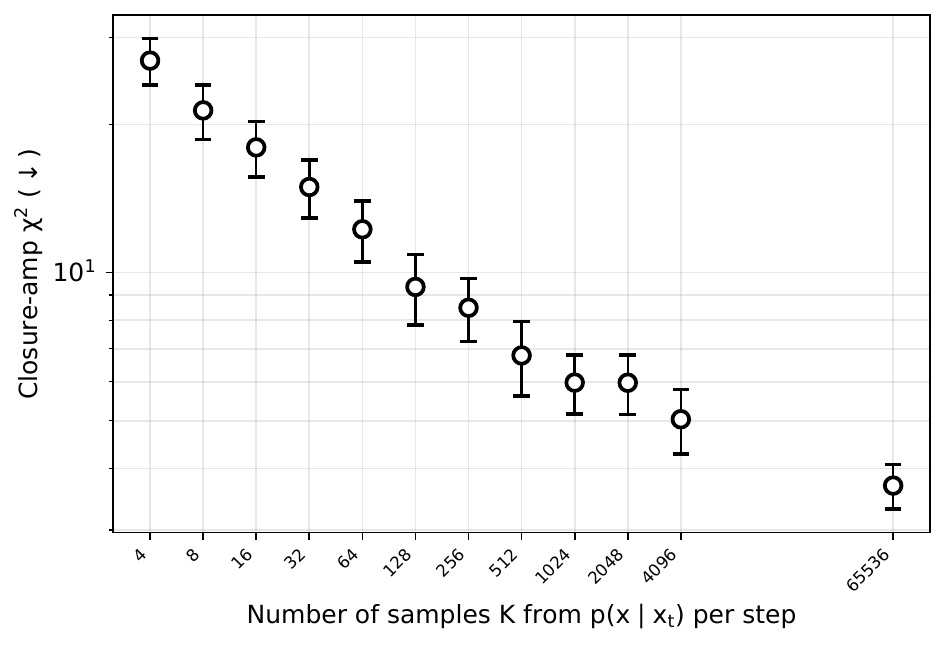}
        \caption{Closure-amplitude $\chi^2$ (lower is better).}
        \label{fig:ablation_k_camp}
    \end{subfigure}
    \caption{$\chi^2$ data-consistency metrics for the $K$-ablation in \Cref{fig:ablation_k}. Both scale log-log in $K$ and improve monotonically past $K=512$. Markers show per-$K$ means with 95\% confidence intervals over 100 tasks; red lines are log-log OLS fits to the means.}
    \label{fig:ablation_k_chi2}
\end{figure}

\section{Experimental Details for Image Inverse Problems}
\label{app:image_inverse_exp_details}
\paragraph{Super-resolution.}
We target 4$\times$ super-resolution on ImageNet \cite{deng2009imagenet} as a stress test of the Calibrated Bayesian Guidance (CBG) estimator on a high-dimensional inverse problem.
For each ground-truth $256{\times}256$ image we form a $64{\times}64$ observation by bicubic downsampling; the likelihood is the Gaussian
$y \sim \mathcal{N}\!\bigl(\mathrm{bicubic}_{4\times}(x),\,\sigma_y^2 I\bigr)$ with $\sigma_y = 0.01$, matching the SR3 benchmark used by
\citet{pandey2025variational} and the baselines below.
Super-resolution then reduces to drawing from the Bayesian posterior with this likelihood and an ImageNet prior.

While guidance methods for ImageNet inverse problems typically use the classifier-guided diffusion prior of \citet{dhariwal2021diffusion}, the number of inner diffusion steps required for each candidate makes the overall budget prohibitive at the candidate counts CBG requires.
We instead take the pixel mean flow (pMF) model of \citet{pixelmeanflows} as the prior; specifically, the released class-conditional pMF-B/16 checkpoint at $256{\times}256$.
The model is one-step, and we can generate candidates with the fast renoising procedure of \Cref{sec:fast-sample-generation}.

Because pMF-B/16 is class-conditional but super-resolution is not, we feed the bicubic LR through the noise-aware $64{\times}64$ ImageNet classifier of \citet{dhariwal2021diffusion} (evaluated at $t{=}0$) and condition pMF on
the classifier's top-1 prediction.
Classifier-free guidance is used with $\omega=3.0$.

While CBG can be applied directly to super-resolution with the per-image MSE log-likelihood as the reward, that reward signal is very sparse: a single scalar per candidate must summarise the agreement between a full $256{\times}256$ image and the LR observation.
The gradient-free estimator then struggles to reconstruct fine detail unless compute is increased dramatically, because most candidates contribute essentially the same global reward and the per-pixel direction is averaged out.

Intuitively, think of an image such as the White Stork in \Cref{fig:placeholder} with the stork in the center, grass behind, and mud below.
It is unlikely that a single sampled candidate will sample all three regions in a single composition; far more often, different candidates recover different parts.
The job of the estimator is to recombine these partially correct candidates into one globally correct direction.

Fortunately, the SR MSE loss decomposes as a sum of local pixel losses, so we can compute and weight the estimator on a per-tile basis.
For $4\times$ SR from $64{\times}64$ to $256{\times}256$ we use one tile per LR pixel, so each tile covers a $4{\times}4{=}16$-pixel patch of the HR image, giving $64{\times}64{=}4096$ tiles in total.
For each tile we evaluate the per-tile log-likelihood of every candidate and compute a tile-local softmax-weighted target; the global drift direction is then the soft stitch of these per-tile targets.
This decoupling lets entirely different weightings be used for different regions of the image and is what makes the estimator usable on SR at a competitive compute budget.

\begin{table}[t]
\centering
\caption{4$\times$ super-resolution on ImageNet ($256{\times}256$), evaluated on the 1000-image SR3 test list of \citet{pandey2025variational} (likelihood noise $\sigma_y{=}0.01$).
Paper rows are reproduced from \citet{pandey2025variational} (Tables~2 and~12); FID/KID are computed with clean-fid in \texttt{legacy\_pytorch} mode, matching the paper's evaluation code.
Best per column in bold.}
\label{tab:imagenet_sr4x_comparison}
\begin{tabular}{lccccc}
\toprule
Method & PSNR$\uparrow$ & SSIM$\uparrow$ & LPIPS$\downarrow$ & FID$\downarrow$ & KID$\downarrow$ \\
\midrule
\textbf{Ours (pMF tiled gradient-free CBG with renoise)} & \textbf{24.41} & 0.668 & 0.298 & 40.74 & 0.0033 \\
\midrule
DPS                           & 23.61 & 0.676          & 0.195          & 30.67          & 0.0021          \\
DDRM                          & 24.15 & \textbf{0.701} & 0.325          & 52.76          & 0.0151          \\
RED-diff                      & 24.06 & 0.685          & 0.354          & 51.83          & 0.0084          \\
C-$\Pi$GDM                    & 23.20 & 0.631          & 0.270          & 39.96          & 0.0024          \\
MPGD                          & 21.83 & 0.587          & 0.215          & 37.39          & 0.0017          \\
RB-Modulation                 & 23.41 & 0.674          & 0.211          & 35.26          & 0.0032          \\
NDTM                          & 23.12 & 0.674          & \textbf{0.158} & \textbf{28.75} & \textbf{0.0011} \\
\bottomrule
\end{tabular}
\end{table}

\Cref{tab:imagenet_sr4x_comparison} reports our run on the 1000-image SR3 list of \citet{pandey2025variational}.
CBG was run with $K=2048$ candidates per step, 8 outer denoising steps, and $64{\times}64$ tiles, for a total of $16\,384$ pMF evaluations per image.
Super-resolving a single image took on average $74\,\mathrm{s}$ on $4$ workstation-class GPUs (NVIDIA Titan RTX).
Our run leads in PSNR and is competitive on SSIM, but trails the strongest perceptual baselines (NDTM, DPS, RB-Modulation) on LPIPS and FID.
The fact that the prior model needs to be different between our method and the baselines, and that the number of diffusion steps is very different, remains as a limitation to compare the numbers.

\begin{figure}
    \centering
    \includegraphics[width=0.9\linewidth]{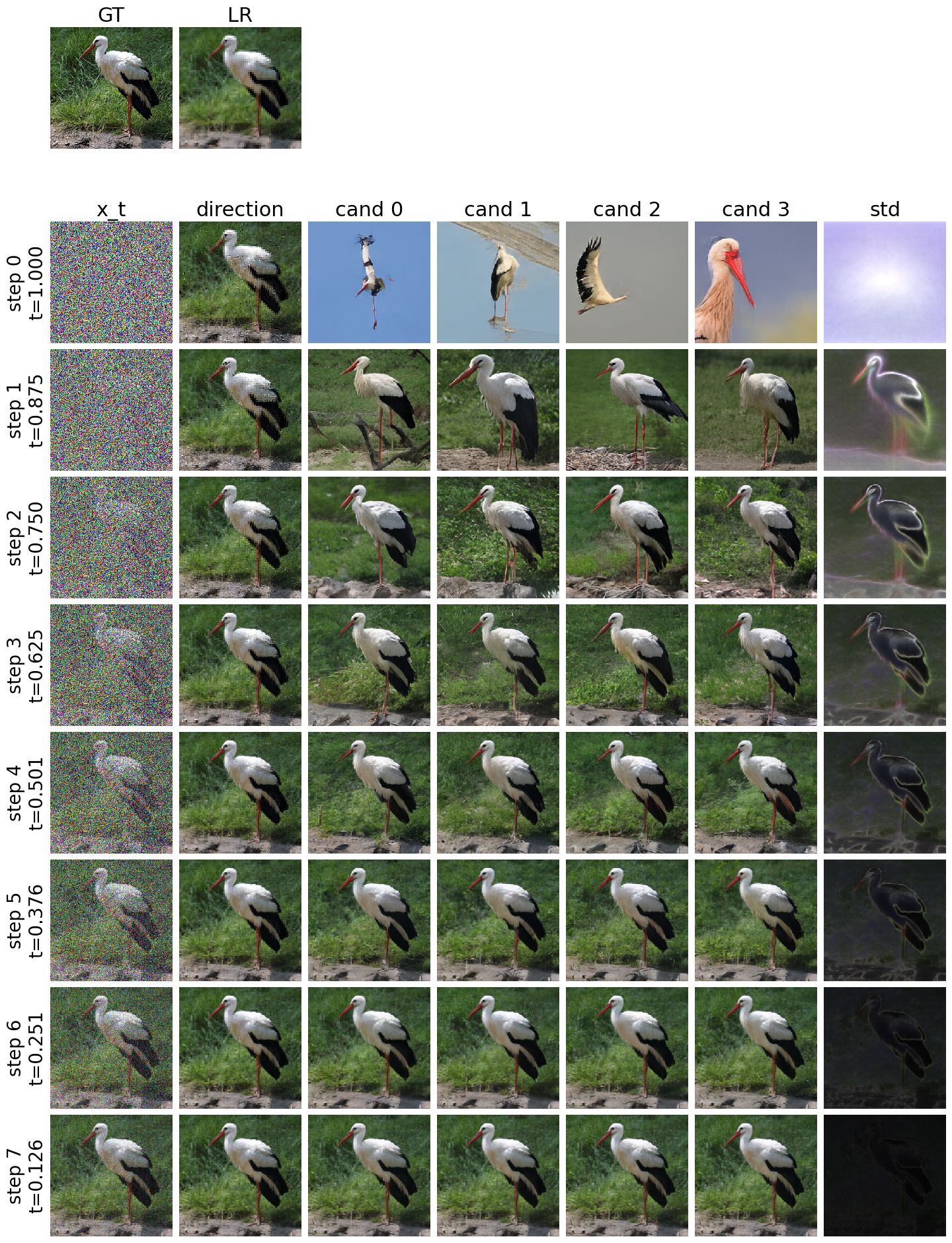}
    \caption{Super-resolution of an image of a White Stork. The ground truth image is $256 \times 256$ and gets downscaled to $64 \times 64$. For each of the eight denoising steps, four out of the 2048 candidates are shown. Out of these candidates, the tiled gradient-free estimator assembles a ``direction'', and the noise is updated towards it. It is visible that the tiled gradient-free estimator can assemble an image early that follows the down-scaled version. The fact that the direction provides a strong signal allows for only 8 denoising steps. The rightmost column visualizes the per-pixel standard deviation of the proposals, where a brighter color channel signals higher diversity. Consequently, the area will be blue if the blue color channel disagrees, white if all disagree, and black when the candidates agree on that pixel. As the denoising process progresses, the samples are highly diverse, but converge towards the final image.}
    \label{fig:placeholder}
\end{figure}

\paragraph{Prompt alignment.}

We use the pretrained SANA-1600M model~\cite{xie2025sana} with the classifier-free guidance scale of 4.5 and generate images with resolution of $512 \times 512$.
We use a VLM-based reward function computed with the pretrained Skywork-VL-Reward-7B model~\cite{wang2505skywork}. 
For each text prompt, we construct a corresponding binary visual question that checks whether the generated image satisfies the prompt-specific constraint.
For instance, for the prompt ``a photo of six dogs,'' we ask: 
``Hint: Please answer the question with exactly one word, Yes or No. Question: Does this image depict exactly six dogs? Criteria: - Count all visible dogs in the image - The total number must be exactly six (no more, no less) Choices: (A) no (B) yes''.
Following FMTT~\cite{sabour2025fmtt}, we define the reward function value as $\sigma(\mathrm{logits}[\mathrm{Yes}] - \mathrm{logits}[\mathrm{No}])$, where $\sigma(\cdot)$ denotes the sigmoid function. 
We set $N=20$, $M=20$, $K=256$, and select $\gamma$ in the range $[100, 500]$.

\section{Additional Analysis for Black Hole Imaging}

Figure~\ref{fig:ablation_k_chi2} shows the additional analysis of the relationship between $K$ and performance, in terms of $\chi^2$-distance metric. 
We also visualize several samples for two random black hole imaging tasks in 
\cref{fig:qual_black_1,fig:qual_black_2}.
\cref{fig:blackhole_examples_mean_flow_large_k} shows five samples that were generated by the mean-flow variant with renoise.

\begin{figure}
    \centering
    \includegraphics[width=0.75\linewidth]{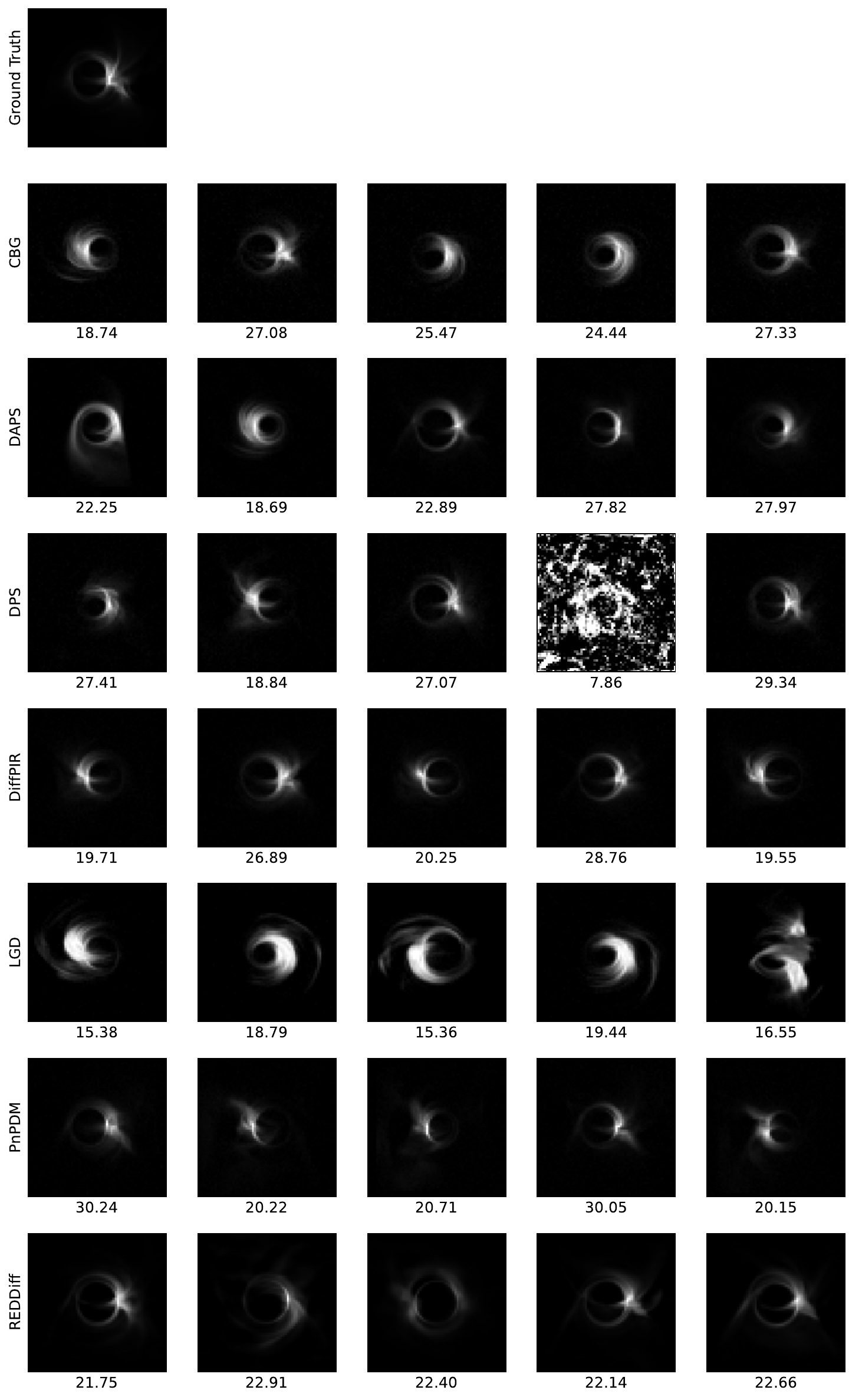}
    \caption{Qualitative analysis of the generative results of the different methods with a random index, five samples were taken per method. The numbers are the PSNR to the ground truth for each task. CBG is the implementation based on the pretrained diffusion model (1000 steps outer loop, 50 steps inner loop, 512 particles)}
    \label{fig:qual_black_1}
\end{figure}

\begin{figure}
    \centering
    \includegraphics[width=0.75\linewidth]{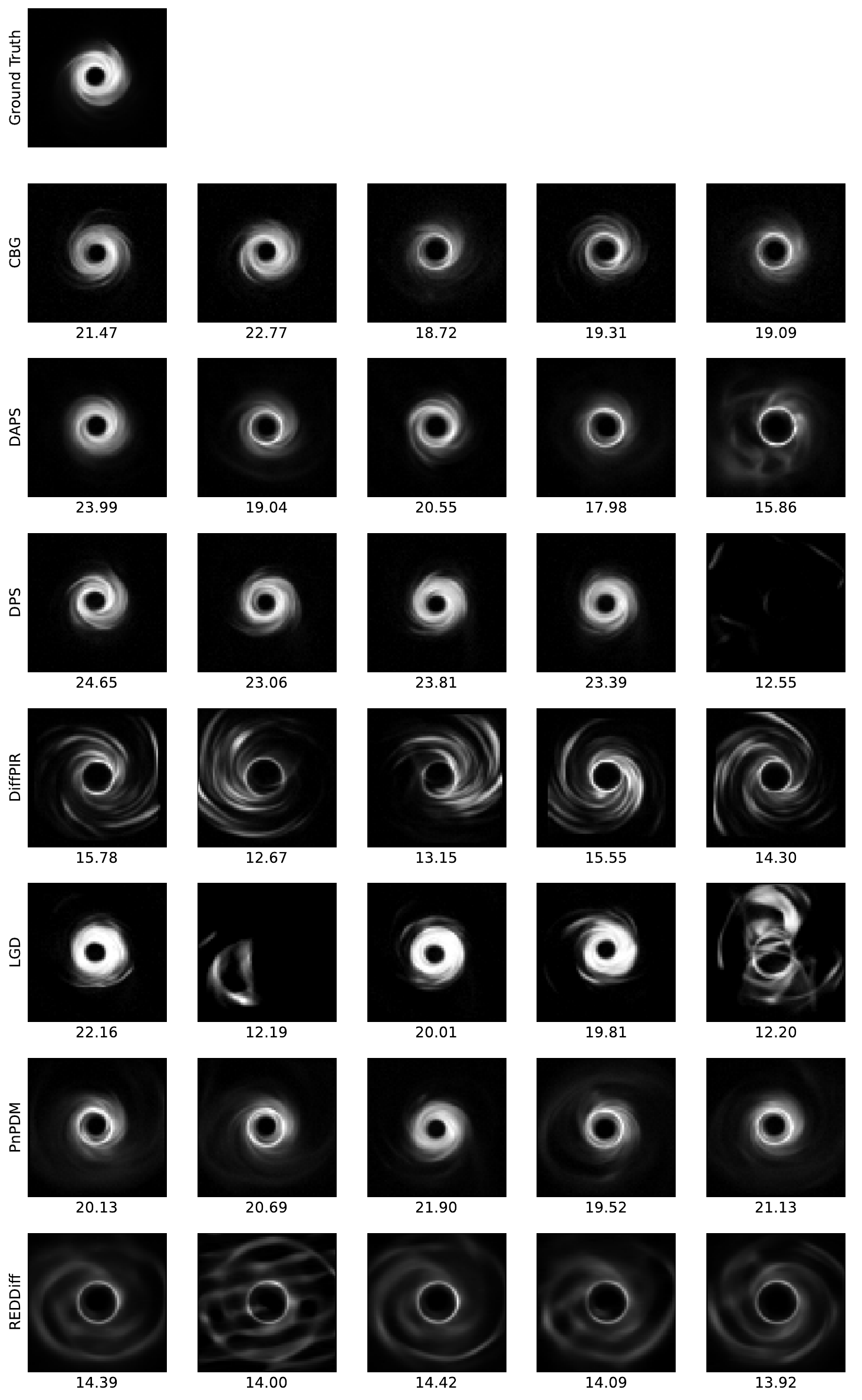}
    \caption{Qualitative analysis of the generative results of the different methods with a different random index, five samples were taken per method. The numbers are the PSNR to the ground truth for each task. CBG is the implementation based on the pretrained diffusion model (1000 steps outer loop, 50 steps inner loop, 512 particles)}
    \label{fig:qual_black_2}
\end{figure}

\begin{figure}
    \centering
    \includegraphics[width=0.75\linewidth]{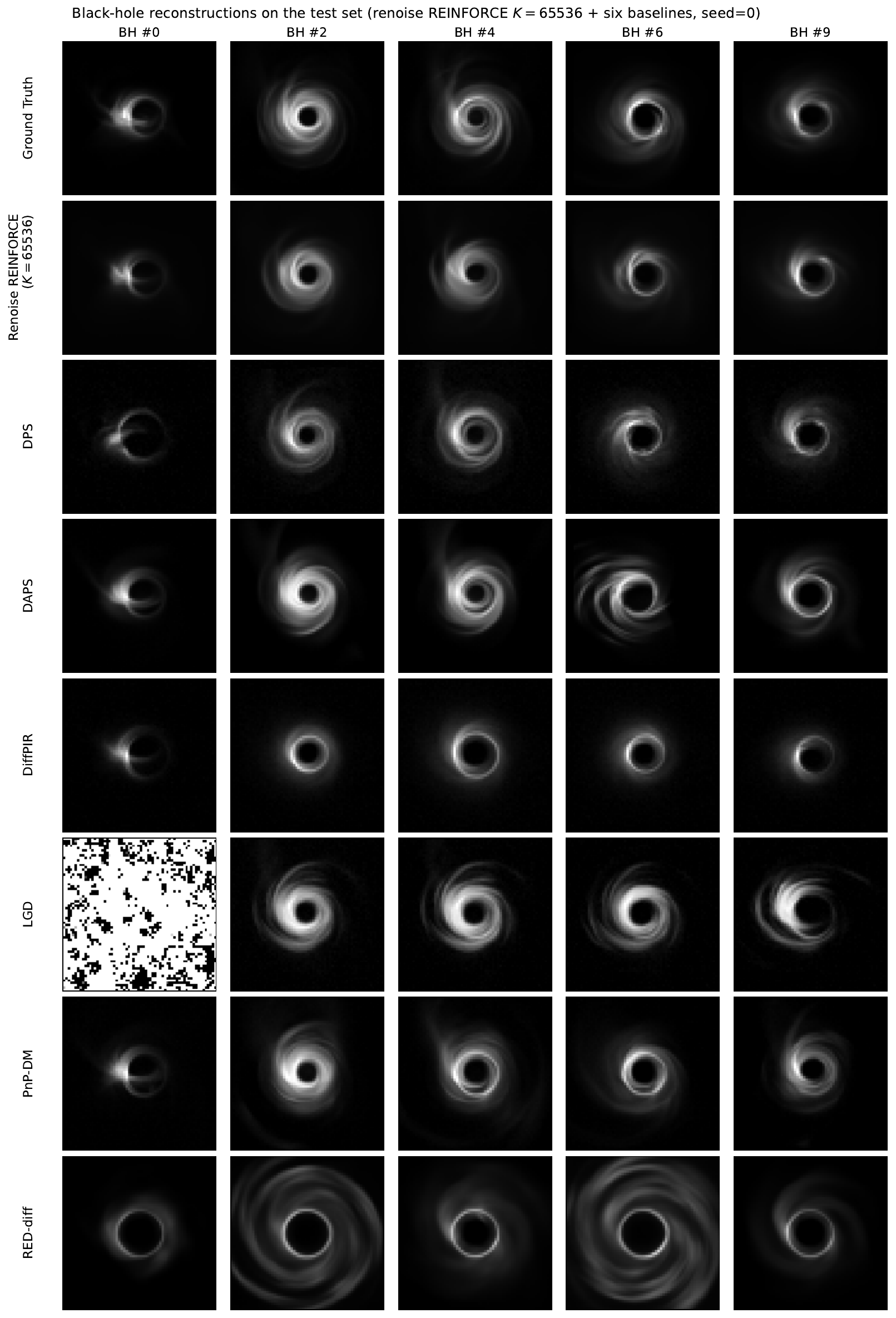}
    \caption{Blackhole reconstruction with the gradient-free pixel mean flow model compared to the baselines. Reconstructions were done with $K=65\,536$}
    \label{fig:blackhole_examples_mean_flow_large_k}
\end{figure}

\end{document}